  \providecommand\BibTeX{{%
    \normalfont B\kern-0.5em{\scshape i\kern-0.25em b}\kern-0.8em\TeX}}}
\newtheorem{theorem}{Theorem}
\newenvironment{customthmc}[1]
  {\innercustomthmc}
  {\endinnercustomthmc}
\begin{document}
\fancyhead{}

\title{Understanding and Improvement of Adversarial Training for Network Embedding from an Optimization Perspective}

\author{Lun Du}
\authornote{Equal contribution}
\authornote{Corresponding author}
\email{lun.du@microsoft.com}
\affiliation{%
  \institution{Microsoft Research Asia}
  \city{Beijing}
  \country{China}
}

\author{Xu Chen}
\authornotemark[1]
\authornote{Work performed during their intership in MSRA}
\email{sylover@pku.edu.cn}
\affiliation{%
  \institution{Peking University}
  \city{Beijing}
  \country{China}
}

\author{Fei Gao}
\authornotemark[3]
\email{feig@mail.bnu.edu.cn}
\affiliation{%
  \institution{Beijing Normal University}
  \city{Beijing}
  \country{China}
}

\author{Qiang Fu}
\email{qifu@microsoft.com}
\affiliation{%
  \institution{Microsoft Research Asia}
  \city{Beijing}
  \country{China}
}

\author{Kunqing Xie}
\authornotemark[2]
\email{kunqing@pku.edu.cn}
\affiliation{%
  \institution{Peking University}
  \city{Beijing}
  \country{China}
}

\author{Shi Han}
\author{Dongmei Zhang}
\email{{shihan, dongmeiz}@microsoft.com}
\affiliation{%
  \institution{Microsoft Research Asia}
  \city{Beijing}
  \country{China}
}



\begin{abstract}
  Network Embedding aims to learn a function mapping the nodes to Euclidean space contribute to multiple learning analysis tasks on networks. However, both the noisy information behind the real-world networks and the overfitting problem negatively impact the quality of embedding vectors. To tackle these problems, researchers utilize Adversarial Perturbations on Parameters (APP) and achieve state-of-the-art performance. Unlike the mainstream methods introducing perturbations on the network structure or the data feature, Adversarial Training for Network Embedding (AdvTNE) adopts APP to directly perturb the model parameters, thus provides a new chance to understand the mechanism behind it. In this paper, we explain APP theoretically from an optimization perspective. Considering the Power-law property of networks and the optimization objective, we analyze the reason for its remarkable results on network embedding. Based on the above analysis and the Sigmoid saturation region problem, we propose a new Sine-base activation to enhance the performance of AdvTNE. We conduct extensive experiments on four real networks to validate the effectiveness of our method in node classification and link prediction. The results demonstrate that our method is competitive with state-of-the-art methods.
\end{abstract}

\begin{CCSXML}
<ccs2012>
  <concept>
      <concept_id>10003033.10003083.10003090.10003091</concept_id>
      <concept_desc>Networks~Topology analysis and generation</concept_desc>
      <concept_significance>500</concept_significance>
      </concept>
  <concept>
      <concept_id>10003033.10003083.10003090.10003092</concept_id>
      <concept_desc>Networks~Physical topologies</concept_desc>
      <concept_significance>300</concept_significance>
      </concept>
  <concept>
      <concept_id>10002951.10002952.10002953.10010146.10010818</concept_id>
      <concept_desc>Information systems~Network data models</concept_desc>
      <concept_significance>300</concept_significance>
      </concept>
  <concept>
      <concept_id>10002950.10003714.10003716</concept_id>
      <concept_desc>Mathematics of computing~Mathematical optimization</concept_desc>
      <concept_significance>300</concept_significance>
      </concept>
 </ccs2012>
\end{CCSXML}
\ccsdesc[500]{Networks~Topology analysis and generation}
\ccsdesc[300]{Networks~Physical topologies}
\ccsdesc[300]{Information systems~Network data models}
\ccsdesc[300]{Mathematics of computing~Mathematical optimization}

\keywords{network embedding, adversarial training, optimization method, saturation region problem}

\maketitle
\section{Introduction}

Graph data play an essential role in social life \cite{cook2006mining,du2018traffic,song2020inferring}. However, numerous graph data contain only the network topology structure, meaning the lack of auxiliary information like node features and edge features. Designed to learn representations of nodes, edges or graphs under this scenario, network embedding methods \cite{Perozzi2014DeepWalk,Tang2015LINE,grover2016node2vec,Cao2015GraRep,gf,9414919,10.1145/3434747} have gained rapid progress in recent years. Based on customized definitions of node neighbors, different unsupervised loss functions are designed to project the nodes from the input space into a lower-dimensional vector space while preserving the structure information. Nevertheless, noises behind the real-world networks and the overfitting problem present new challenges in network embedding.

Adversarial training, initially designed to defend adversarial attacks in the area of computer vision \cite{goodfellow2014explaining}, has shown effectiveness in improving the robustness of deep learning models. Thus, it is extensively used in a variety of domains such as computer vision \cite{Miyato2015distributional,madry2017towards,shafahi2019adversarial}, speech recognition \cite{sun2018domain,liu2019adversarial,drexler2018combining} and natural language processing \cite{miyato2018virtual,zhu2019freelb,liu2020adversarial}. Previous studies focus on directly introduce adversarial perturbations to input space, forcing the estimated distribution to be smoother and hence increases the adversarial robustness of different methods \cite{miyato2018virtual}. Tempted by the success of adversarial training in a wide range of areas, researchers begin to apply adversarial training to the network embedding area for better performance. However, most works follow the manner that adding perturbations on the input space, including  the network topology structure \cite{dropedge,topology, 142adversarial} or data features \cite{31batch,40graph,89virtual,105adversarial, edgefeature, latentfeature}, leaving \textbf{A}dversarial \textbf{P}erturbations on \textbf{P}arameters (APP) less studied.

Adversarial Training for Network Embedding (AdvTNE) \cite{dai2019adversarial}, to the best of our knowledge, is the first research that fills in this gap and has attracted widespread attention from the machine learning communities. AdvTNE introduces a disruptive change to conventional adversarial training on the network by adding adversarial perturbations to the model parameters and achieves impressive results. Following this study, APP is applied in computer vision in turn as a regularization method for better generalization ability \cite{wu2020adversarial,zheng2020regularizing,foret2020sharpness}. Compared with the conventional perturbation-on-input methods that can be interpreted from the perspective of robust learning or function smoothing \cite{miyato2018virtual}, the start-up APP-based methods lack an in-depth understanding. On the one hand, interpretability remains a desire yet challenge in the machine learning domain, and it is essential since it will enable better improvement against models and promote the development of the machine learning community. On the other hand, adversarial perturbations on model parameters demonstrates an entirely new paradigm of adversarial training and shows great potential in promoting model effectiveness, especially for those solving transductive embedding problems. A reasonable explanation is in urgent need to exhibit why and where APP can work well, and thus we are able to determine whether APP can be generalized to other similar scenarios such as word embedding \cite{Mikolov2013Distributed,levy2014linguistic}, tag embedding \cite{wang2019tag2vec, wang2019tag2gauss}, table embedding \cite{gentile2017entity,zhang2019table2vec,10.1145/3447548.3467228} and broader domains. 

In this paper, we theoretically analyze APP from an optimization perspective to explore the reason for its remarkable effectiveness and propose an improved method EATNE under the network embedding scenario. Specifically, we first prove that APP can be interpreted as an optimization method. To understand this optimization method more intuitively, we further analyze the relationship between APP and other optimization methods, reaching a conclusion that APP is a Momentum-like method that enhances the previous information for updating model parameters (momentum information for short) during the training process. 
Secondly, we answer the question of why APP works well in the scene of network embedding from both theoretical analysis and experiments. We discover that APP shows extraordinary performance when the model parameters get stuck in the saturation region of the loss function. By combining the properties of network data and the relationship between APP and Momentum, we conclude that APP alleviates vanishing gradient in the saturation region that is quite common in the Sigmoid activation and verify this conclusion through empirical experiments. Finally, based on the above analysis, we propose a new method EATNE to enhance AdvTNE for a better solution to the Sigmoid saturation region problem. Experimental results on various datasets and two downstream tasks provide strong support for our theoretical analysis and the well-designed Sine-based activation of EATNE.

The main contributions of our paper can be summarized as follows:
\begin{itemize}
    \item We analyze the emerging method, adversarial perturbations on parameters,  from the perspective of optimization and reveal its relationship with the Momentum optimization method from both theory and practice.
    \item As a special case, we provide an in-depth analysis of APP on network embedding, i.e., AdvTNE.
    Through the analysis of network properties and the optimization objective as well as empirical studies, we conclude that APP enhances the momentum information in the Sigmoid saturation region and assists the training procedure so that the AdvTNE model can learn better node representations.
    \item Based on theoretical analysis, we further improve AdvTNE by designing a new Sine-based activation, which is expected to obtain a better solution than the Sigmoid activation.  
    \item We conduct extensive experiments on four real networks to validate the effectiveness of our proposed EATNE on node classification and link prediction tasks. The results demonstrate that EATNE is competitive with state-of-the-art methods.
\end{itemize}

\section{Preliminaries}
\subsection{Notations} Considering a graph $G = (V, E, A)$ with  node set $V = \{v_1, ..., v_N\}$, a edge set $E = \{e_{ij}\}$, and the adjacency matrix $A$. Here $A$ is defined as $\bf{A} = [A_{ij}]$, where $A_{ij} = 1$ if $e_{ij} \in E$ and $A_{ij} = 0$ otherwise. 
$d_i = \sum_j A_{ij}$ is the degree of node $v_i$. 
Given a network $G = (V, E, A)$, the purpose of \textbf{network embedding} is to learn a function $f: V \mapsto U,$ where $U \in \mathbb{R}^{N\times r}$ with embedding dimension $r \ll N$, preserving the structural properties of $G$. We denote $\vec u_i = f(v_i)$ as the vector representation of node $v_i$.

\subsection{Skip-gram based Network Embedding}
Skip-gram based Network Embedding(SGNE), the basis of adversarial training for network embedding, is a class of popular network embedding methods, including LINE, DeepWalk, Node2vec and so on \cite{dai2019adversarial}. In these methods, a vector representation of a certain node $v_i$ is learned by maximizing its likelihood of co-appearance with its neighbor node set $N_S(v_i)$ and the loss accelerated by negative sampling is defined as follows:
\begin{equation}
\label{equ:line_objective}
\begin{split}
    L =& - \!\! \sum_{v_i \in V} \!\!\  \sum_{v_j \in N_S(v_i)} \!\!\!\!\!\!   w_{ij} \! \left(\log S_{ij}^+ \! + \! k \cdot \mathbb{E}_{v_n \sim P_{n}(v)}[\log S_{in}^-]\right)  \\
    S_{ij}^+& = \sigma(\vec u_j' \cdot \vec u_i),\qquad S_{in}^- = 1 - S_{in}^+
\end{split}
\end{equation}
where $w_{ij}$ is the weight of node pair $(v_i, v_j)$ and it can be estimated by the expectation number of co-occurrences of $(v_i, v_j)$ for models based on random walk. $\Vec{u_i}$ is the center vector of $v_i$ and $\Vec{u_j}'$ is the context vector of $v_j$. $S^+_{ij}$ is the normalized similarity of $v_i$ and $v_j$ under the corresponding representations $\Vec{u_i}$ and $\Vec{u_j}'$ learned from the model. $\sigma(\cdot) = 1 / (1 + e^{-x})$ is the classic Sigmoid activation function and $N_S(v_i)$ is the neighbor nodes set of $v_i$ with different definitions in a variety of methods. $k$ is the number of negative samples and $P_{n}(v) \propto d^\alpha_v$ is the noise distribution for negative sampling, where $d_v$ is the degree of the vertex $v$ and $\alpha$ is a hyper parameter. We set $\alpha = 1$ for simplicity, i.e., $P_n(v) \propto d_v$. 

\subsection{Adversarial Training}
The main idea of adversarial training is to augment the dataset with generated adversarial examples during the training process. Considering a typical task i.e., classification, the negative log-likelihood loss on adversarial examples is defined as follows:
\begin{equation}
\begin{split}
\label{equ:advt_max}
    \max_\theta &\quad \log p(y | x + n^{(adv)}; \theta)\\
    where &\quad n^{(adv)} = \mathop{\arg\min}_{n, ||n|| \leq \rho} \log p(y | x + n; \hat \theta)
\end{split}
\end{equation}
where $x, y$ is the pair of the input features and the label. $\theta$ denotes model parameters and $n^{(adv)}$ is the perturbation on $x$. $\rho$ acts as a norm constraint of the perturbation $n$. $\hat \theta$ represents the model parameters fixed as constants when updating $n^{(adv)}$.
To enhance the local smoothness of the learned function of the model, the data features are perturbed to the maximum extent by maximizing the loss in each iteration of optimization.
\subsection{Adversarial Training for Network Embedding and Adversarial Perturbations on Parameters}
Adversarial Training for Network Embedding (AdvTNE) generalizes the above paradigm to SGNE and utilizes Adversarial Perturbations on Parameters (APP). The overall objective can be formulated as:

\begin{equation}
\begin{split}
\label{equ:advtne}
     \min_{\Theta} \quad & L(G | \Theta) + \lambda L(G | \Theta + n^{(adv)})\\
    where & \quad n^{(adv)} = \mathop{\arg\max}_{n, ||n|| \leq \rho} L(G| \hat \Theta + n)
\end{split}
\end{equation}
where $\Theta$ represents all learnable parameters including center vector $\vec u$ and context vector $\vec u'$ and $L$ is SGNE objective defined in Eq. \eqref{equ:line_objective}.

The second term in the objective of AdvTNE directly adds adversarial perturbations in parameter space rather than input space as most adversarial models do. Besides, parameter $\Theta + n$ cannot correspond to a real node on the network $G$, raising more questions about the usage of APP. As a consequence, this new method cannot be explained by enhancing the local smoothness of the function, arousing us to find a reasonable explanation for the mechanism behind its impressive performance. 

\section{Understanding of Adversarial Perturbations on Parameters}

Our interpretation of APP will be divided into two steps. We will start with the optimization process of APP. Theoretical analysis shows that APP can be understood as an optimization method, and it demonstrates similarities with the Momentum method in certain situations. Secondly, we will analyze why AdvTNE works well as a new optimization method on the network embedding task and verify it through some experiments. Due to the page limitation, some proofs in the section are listed in the Appendix.

\begin{figure}[!ht]
    \centering
    \includegraphics[width=0.3\textwidth]{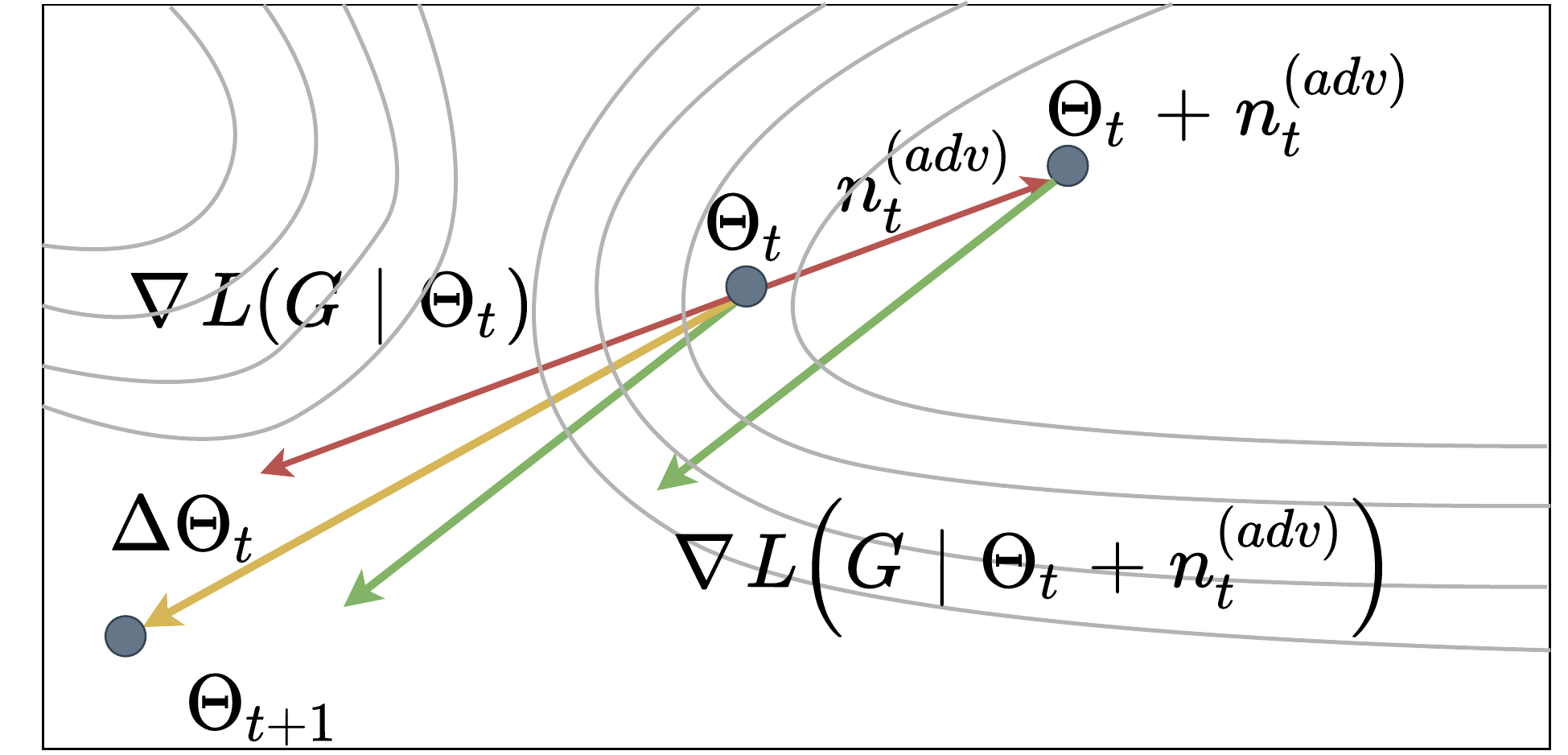}
    \caption{Update procedure of APP. The downward-pointing red line represents gradient of $\boldsymbol{L(G|\Theta)}$ and the adversarial perturbation $\boldsymbol{n_t^{(adv)}}$ is in the opposite direction. The green line is the gradient of loss at a perturbed position $\boldsymbol{\Theta_t+n_t^{(adv)}}$. The yellow line is the final descent direction.}
    \label{fig:advt_fig}
\end{figure}

\subsection{APP as New Optimization Method}
Eq. \eqref{equ:advtne} shows that APP perturbs the parameter $\Theta$ dynamically at each iteration of gradient descent. This inspire us to look into the optimization procedure of APP. As Fig.\ref{fig:advt_fig} illustrates, to optimize the overall loss function at step $t$, model parameter $\Theta_t$ will be perturbed along the perturbation direction $n_t^{(adv)}$. Then $\Theta_t$ is updated from both the gradients of $L(G | \Theta_t)$ and the perturbed objective $L(G | \Theta_t + n^{(adv)}_t)$, indicating the direction of gradient descent does not depending only on $\nabla L(G | \Theta)$. It can be viewed as an optimization strategy in essence. Therefore, we have the following theorem: 

\begin{theorem}
\label{theorem:optimization}
Given an unconstrained differentiable loss function $L(\Theta)$, applying APP strategy (Eq. \eqref{equ:advtne}) with gradient descent is equivalent to iterative optimization based on the following strategy when updating the parameter $\Theta$:

\begin{equation}
\begin{split}
\label{equ:advtne_strategy}
    \Theta_{t+1} &= \Theta_t - \epsilon \Delta \Theta_t,\\
    \Delta \Theta_t =  \nabla & L(\Theta_t) + \lambda \nabla L(\Theta_t + n^{(adv)}_t)
\end{split}
\end{equation}

where $\Theta_t$ is the value of $\Theta$ in the $t$-th iteration, $\epsilon$ is the learning rate, and $n^{(adv)}_t$ is the adversarial perturbation which equals $\mathop{\arg\max}_{n, ||n|| \leq \rho} L(\Theta_t + n)$.
\end{theorem}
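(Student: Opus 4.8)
The plan is to show that a single step of gradient descent applied to the composite AdvTNE objective in Eq.~\eqref{equ:advtne} collapses exactly to the update rule~\eqref{equ:advtne_strategy}. First I would fix the iteration index $t$ and regard the objective as a single function $F(\Theta) = L(\Theta) + \lambda L(\Theta + n^{(adv)})$ of the parameter being optimized, keeping in mind the convention attached to the inner problem: when the perturbation is generated, the parameters are frozen (written $\hat\Theta$ in Eq.~\eqref{equ:advtne}), so $n^{(adv)}_t$ is computed from the current value $\Theta_t$ but is thereafter treated as a constant vector when the descent step is taken.

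With that convention in place, the computation is a direct application of the chain rule. Writing the second summand as $g(\Theta) = L(\Theta + c)$ with $c = n^{(adv)}_t$ held fixed, its gradient with respect to $\Theta$ is simply $\nabla L$ evaluated at the shifted point, i.e.\ $\nabla g(\Theta) = \nabla L(\Theta + n^{(adv)}_t)$. Hence $\nabla_\Theta F(\Theta_t) = \nabla L(\Theta_t) + \lambda \nabla L(\Theta_t + n^{(adv)}_t)$, which is precisely the claimed $\Delta\Theta_t$. Substituting into the standard gradient-descent update $\Theta_{t+1} = \Theta_t - \epsilon\, \nabla_\Theta F(\Theta_t)$ yields Eq.~\eqref{equ:advtne_strategy}, establishing the equivalence.

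The one point that needs care --- and the step I expect to be the main obstacle --- is justifying that $n^{(adv)}_t$ really may be treated as a constant rather than as a function $n^{(adv)}(\Theta)$ implicitly defined by the inner arg max. A naive differentiation of $L(\Theta + n^{(adv)}(\Theta))$ would produce an additional Jacobian term from the dependence of the maximizer on $\Theta$. I would resolve this in two complementary ways. Algorithmically, the AdvTNE procedure freezes $\hat\Theta$ while solving the inner maximization, so by construction no gradient flows through $n^{(adv)}_t$; this is exactly the stop-gradient convention and makes the chain-rule computation above literal. Theoretically, even if one insists on accounting for the implicit dependence, the envelope theorem (Danskin's theorem) guarantees that the gradient of $\phi(\Theta) = \max_{\|n\|\le\rho} L(\Theta + n)$ equals $\nabla L(\Theta + n^{(adv)})$ with the maximizer held fixed, so the extra Jacobian term vanishes whenever the inner problem is solved to optimality. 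Either route legitimizes discarding the derivative of the arg max, after which the remaining argument is the elementary substitution described above.
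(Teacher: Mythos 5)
Your proposal is correct and follows essentially the same route as the paper's own proof: fix $n^{(adv)}_t$ from the inner maximization with $\Theta_t$ frozen, observe that the composite objective then depends only on $\Theta$, and read off the gradient-descent step. Your additional justification via the stop-gradient convention and Danskin's theorem for why the arg-max may be treated as a constant is a point the paper glosses over, and it is a welcome (and correct) strengthening, but it does not change the underlying argument.
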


Eq. \eqref{equ:advtne_strategy} holds for every reasonable loss $L$, and it provides the way to update parameter $\Theta$ so that APP can indeed be regarded as an optimization method. Note that APP updates $\Theta$ and $n^{(adv)}$ every iteration following common used manner proposed in \cite{goodfellow2014explaining}.
\subsection{Property of the APP Optimization Method}
In order to have a deeper understanding of this optimization method, we analyze its property especially when the loss function gets stuck in the saturation region or is about to converge. We discover that the current value of perturbed loss $L(\Theta_t +n)$ is very close to the value of loss at last step, i.e., $L(\Theta_{t-1})$. Mathematically, we have the following theorem:

\begin{theorem}
\label{theorem:advt_approx}
If (1) $L$ is L-smooth, i.e., $||\nabla L (\Theta_t) - \nabla L(\Theta_{t-1})||\leqslant l ||\Theta_t - \Theta_{t-1}||$ 
and (2) $||\nabla L(\Theta_t)||\leqslant\delta$ hold where $l,\delta$ are both finite positive constants, then we have the following inequality:
\begin{equation}
\label{eq:max}
    \left|\max_{||n||\leq\rho}\{L(\Theta_t+n)\}-L(\Theta_{t-1}) \right| \leqslant (1+ \lambda \epsilon l)\rho \delta + \epsilon(1+\lambda) \delta^2.
\end{equation}
\end{theorem}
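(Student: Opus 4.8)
The plan is to leverage the update rule established in Theorem~\ref{theorem:optimization}, which makes the step $\Theta_t-\Theta_{t-1}$ explicit, and then to compare $\max_{\|n\|\le\rho}L(\Theta_t+n)$ with $L(\Theta_{t-1})$ through a single application of the triangle inequality. Writing $n^{\ast}=\arg\max_{\|n\|\le\rho}L(\Theta_t+n)$ so that the maximum equals $L(\Theta_t+n^{\ast})$, I would insert the intermediate value $L(\Theta_t)$ and split
\[
\bigl|L(\Theta_t+n^{\ast})-L(\Theta_{t-1})\bigr|\le\bigl|L(\Theta_t+n^{\ast})-L(\Theta_t)\bigr|+\bigl|L(\Theta_t)-L(\Theta_{t-1})\bigr|.
\]
The first term measures how much the perturbation $n^{\ast}$ (of radius at most $\rho$) moves the loss, and the second measures how far one optimization step moved it; each will be controlled by a mean-value / fundamental-theorem-of-calculus argument together with the gradient bound~(2).

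For the second term I would substitute the step from Theorem~\ref{theorem:optimization}, namely $\Theta_t-\Theta_{t-1}=-\epsilon\bigl(\nabla L(\Theta_{t-1})+\lambda\nabla L(\Theta_{t-1}+n^{(adv)}_{t-1})\bigr)$, and bound its norm. Here assumption~(2) controls $\|\nabla L(\Theta_{t-1})\|\le\delta$, while the \emph{perturbed} gradient $\nabla L(\Theta_{t-1}+n^{(adv)}_{t-1})$ is handled by the $L$-smoothness~(1): since $\|n^{(adv)}_{t-1}\|\le\rho$ one gets $\|\nabla L(\Theta_{t-1}+n^{(adv)}_{t-1})\|\le\delta+l\rho$. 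This yields $\|\Theta_t-\Theta_{t-1}\|\le\epsilon\bigl((1+\lambda)\delta+\lambda l\rho\bigr)$, and multiplying by the gradient bound $\delta$ along the segment from $\Theta_{t-1}$ to $\Theta_t$ gives $\bigl|L(\Theta_t)-L(\Theta_{t-1})\bigr|\le\epsilon(1+\lambda)\delta^2+\lambda\epsilon l\rho\delta$. For the first term the displacement has norm at most $\rho$, so the same gradient bound gives $\bigl|L(\Theta_t+n^{\ast})-L(\Theta_t)\bigr|\le\delta\rho$. Adding the two estimates reproduces $(1+\lambda\epsilon l)\rho\delta+\epsilon(1+\lambda)\delta^2$ exactly.

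The delicate point, and the step I expect to require the most care, is the treatment of the gradient at perturbed parameters. The smoothness inequality~(1) is what converts the adversarial term $\lambda\nabla L(\Theta_{t-1}+n^{(adv)}_{t-1})$ into the extra $\lambda l\rho$ contribution, and it is the sole source of the $\lambda\epsilon l$ factor in the bound; getting the constant right requires applying~(1) with base point $\Theta_{t-1}$ and increment $n^{(adv)}_{t-1}$, not merely between consecutive iterates. A second subtlety is that a fully rigorous mean-value estimate of each piece also produces second-order remainders of size $O(l\rho^2)$ and $O(\epsilon^2)$; these are dominated by the stated first-order terms in the regime of interest (small perturbation radius $\rho$ and small step $\epsilon$, precisely the near-convergence / saturation setting the theorem targets), so I would argue they are absorbed and retain only the leading terms. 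Apart from this bookkeeping, the argument is a routine combination of the triangle inequality, the gradient bound~(2), and the Lipschitz-gradient bound~(1).
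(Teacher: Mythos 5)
Your proposal is correct and follows essentially the same route as the paper's proof: both decompose the quantity through the intermediate value $L(\Theta_t)$, bound the perturbation contribution by $\rho\delta$ via a first-order estimate, and use the update rule from Theorem~1 together with $L$-smoothness applied to the perturbed gradient to extract the $\lambda\epsilon l\rho\delta$ and $\epsilon(1+\lambda)\delta^2$ terms. The paper likewise drops the second-order Taylor remainders you flag, so your bookkeeping matches its level of rigor.
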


Based on Theorem. \ref{theorem:advt_approx}, when the objective gets stuck in the saturation region or is about to converge, we have the following two corollaries:
\begin{customthmc}{2.1}
\label{cor1}
The perturbation in Eq. \eqref{equ:advtne} can be approximated as:
\begin{equation}
    n^{(adv)}_t = \Theta_{t-1} - \Theta_t.
\end{equation}
\end{customthmc}
\begin{customthmc}{2.2}
\label{cor2}
The AdvTNE optimization method can be formulated as:
\begin{equation}\label{eq:advt_momentum}
\begin{split}
    \Theta_{t+1} &= \Theta_t- \epsilon \Delta\Theta_t,\\
    \Delta \Theta_t = \nabla &L(\Theta_t) +\lambda \nabla L(\Theta_{t-1}).
\end{split}
\end{equation}
\end{customthmc}

Theorem. \ref{theorem:advt_approx} and two corollaries demonstrate that the optimal adversarial perturbation $n_t^{(adv)}$ can be approximated as $\Theta_{t-1} - \Theta_t$ when the objective gets stuck in the saturation region or is about to converge, in other words, update procedure of APP can be approximated as a combination of gradient of loss at current step $t$ and the last step $t-1$. As the original method requires additional calculation of $\nabla L(\Theta_t+n_t^{(adv)})$, the training procedure will be speed up if we directly replace it with $\nabla L(\Theta_{t-1})$. Theorem. \ref{theorem:advt_approx} provides the theoretical justification for this approximation.

We notice that Eq. \eqref{eq:advt_momentum} is similar to the formula of other well-known optimization methods, and this arouses us to investigate their relations in the next section.

\subsection{Relationship with Momentum}
To understand the new optimization method more intuitively, we further analyze the relationship between the APP strategy and classic optimization methods like Momentum \cite{ruder2016overview}. 
We first show the update strategy of Momentum:
\begin{align}
    \Theta_{t+1} &= \Theta_t - \epsilon \Delta \Theta_t,\\ 
    \label{equ:mom_update}
    \Delta  \Theta_t = \nabla & L(\Theta_t) + \eta \Delta \Theta_{t-1} 
\end{align}
where $\eta \in (0, 1)$ is a hyper-parameter controlling the weight of historical information. 

In particular, Momentum computes Exponential Moving Average of the gradient sequence to run through the saturation region, which can be seen by expanding Eq. \eqref{equ:mom_update} as:
\begin{align}
    \Delta  \Theta_t &= \nabla L(\Theta_t) + \eta \nabla L(\Theta_{t-1}) + \eta^2 \nabla L(\Theta_{t-2}) + \cdots \notag \\
    \label{equ:mom_apr}
    &= \nabla L(\Theta_t) + \eta \nabla L(\Theta_{t-1}) + \mathcal{O}(\eta^2)
    \approx \nabla L(\Theta_t) + \eta \nabla L(\Theta_{t-1}).
\end{align}
Eq. \eqref{equ:mom_apr} indicates that weights of gradients at previous states will decrease exponentially by $\eta$. Comparing the updating strategies of Momentum in Eq. \eqref{equ:mom_apr} and APP in Eq. \eqref{eq:advt_momentum}, it is apparent that they are similar in the parameter update procedure, especially if $\mathcal{O}(\eta^2)$ is small enough and can be ignored. Intuitively, when the loss is about to converge or gets stuck in the saturation region, the optimal adversarial direction functions as vanishing the effect of updating; that is to say, the parameter at step $t$ is perturbed back to the last step $t-1$. Consequently, the adversarial term becomes a momentum term.

Furthermore, we conduct experiments to verify this conclusion. We compare the performance of DeepWalk optimized with Momentum and APP (i.e., AdvTNE), respectively. Experiment settings are described in Section. \ref{sec:exp} and the results are presented on Tab. \ref{tab:momentum}. We can find that utilizing APP to optimize DeepWalk achieves comparable results with Momentum-optimized DeepWalk, indicating that APP is similar to Momentum from both theory and practice. As APP performs well in network embedding, we can attribute the effectiveness to the momentum information. 

We also know that APP and Momentum are similar especially when the loss function gets stuck in the saturation region or the model parameters are about to converge. As models utilizing APP like AdvTNE achieve impressive performance improvement compared with the original ones, we believe that the APP method shows its superiority over the original method in this scene. However, it raises another question: is the phenomenon of saturation region severe on network embedding? We will discuss it in the next section.

\subsection{Why APP Performs Well on Network Embedding}
APP, regarded as a new optimization method, decreases the loss function in a way that strengthens the momentum information, especially when the loss function is on the saturation region. However, we still have no idea why the model performance is improved when the optimization method emphasizes more on this region. In this section, we will reveal the saturation region problem on network embedding if Sigmoid is selected as the activation. Note that AdvTNE can be utilized for all Skip-gram based Network Embedding (SGNE) methods, and we take LINE as an example in the following analysis for simplicity.

\paragraph{Intuition} We directly post the intuitive conclusion of our theoretical analysis. 
In a scale-free network, the similarity $S^+$ of most node pairs learned by SGNE methods approaches 1, leading to indistinguishable embedding vectors of these nodes. It increases the burden on downstream tasks as it will be more challenging to distinguish these nodes or node pairs precisely. We expect to accurately learn nodes representations that can enable the downstream learner to tell slight differences between the similarity of node pairs. 
However, utilizing Sigmoid as the activation in SGNE models will force $S^+$ of node pairs with high similarity value to fall into the saturation region of the Sigmoid function. The gradients of parameters are about to vanish so that the similarity cannot be optimized anymore. Considering the APP method, an important reason it works well is that it collects the momentum information to enhance the gradient in the Sigmoid saturated region and is able to reach a better minimum.

To verify our intuition, we analyze the theoretical optimal solution of SGNE method and calculate its value in a practical situation. First, we have the following conclusion:

\begin{theorem}
\label{theorem:minimum}
For sufficiently large embedding dimensionality $r$, Eq. \eqref{equ:line_objective} The minimum for similarity of point pair $(v_i, v_j)$ has the following form:
\begin{equation}
\begin{split}
\label{equ:line_converge}
    S_{ij}^+ = \frac{w_{ij}}{w_{ij}+(\frac{d_id_j}{D})k},
\end{split}
\end{equation}
where $d_i = \sum_j w_{ij}$ and $D = \sum_i d_i$.
\end{theorem}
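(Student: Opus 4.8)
The plan is to follow the implicit-matrix-factorization argument of Levy and Goldberg: for sufficiently large $r$, the inner products $x_{ij} = \vec u_j' \cdot \vec u_i$ can be chosen almost independently of one another, so that minimizing the global objective $L$ in Eq.~\eqref{equ:line_objective} decouples into a family of independent scalar minimizations, one per node pair $(v_i,v_j)$. Since $S_{ij}^+ = \sigma(x_{ij})$ and $\sigma$ is a bijection from $\mathbb{R}$ onto $(0,1)$, I would treat each $S_{ij}^+$ as a free variable in $(0,1)$ and minimize $L$ coordinate-wise, then read off the optimal similarity.

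First I would rewrite $L$ by expanding the negative-sampling expectation. Since $P_n(v_n) = d_n/D$ under $\alpha=1$, the expectation term becomes $\sum_{v_n}(d_n/D)\log S_{in}^-$, and using $\sum_{v_j \in N_S(v_i)} w_{ij} = d_i$ the entire negative-sampling contribution collapses, after relabeling the dummy sample index as $j$, to $\sum_{v_i}\sum_{v_j}\frac{k\,d_i d_j}{D}\log(1-S_{ij}^+)$. Collecting every occurrence of a fixed pair $(v_i,v_j)$ with $w_{ij}>0$, the only terms of $L$ that depend on $S_{ij}^+$ are
\begin{equation}
\ell(S_{ij}^+) = -\,w_{ij}\log S_{ij}^+ - \frac{k\,d_i d_j}{D}\log\!\left(1-S_{ij}^+\right),
\end{equation}
so the global problem reduces to minimizing this one-dimensional function. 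Because each $-\log$ summand carries a strictly positive coefficient, $\ell$ is strictly convex on $(0,1)$, guaranteeing a unique interior minimizer.

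Finally I would set $\partial\ell/\partial S_{ij}^+ = 0$, giving $w_{ij}/S_{ij}^+ = (k\,d_i d_j/D)/(1-S_{ij}^+)$; solving this linear equation yields exactly $S_{ij}^+ = w_{ij}/(w_{ij}+(d_i d_j/D)k)$, which lies in $(0,1)$ since all quantities are positive, and strict convexity certifies it as the unique global minimum. The main obstacle is rigorously justifying the decoupling step: the scalars $x_{ij}$ are not genuinely free, since they must be realizable as entries of a matrix of inner products whose rank is at most $r$. The phrase ``sufficiently large embedding dimensionality'' is doing real work here, and the careful part is showing that once $r$ is large enough the target inner-product pattern prescribed by the per-pair optima is attainable by some choice of center and context vectors, so that the coordinate-wise optimum is in fact achieved. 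I expect this realizability claim, rather than the elementary one-variable calculus, to be the delicate point of the argument.
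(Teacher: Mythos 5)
Your proposal follows essentially the same route as the paper's own proof: expand the negative-sampling expectation using $P_n(v)\propto d_v$, invoke the Levy--Goldberg large-$r$ decoupling so each $S_{ij}^+$ can be optimized independently, and solve the resulting one-variable stationarity condition. Your version is in fact slightly more careful than the paper's (you keep the signs of the minimization consistent, note strict convexity to certify a unique interior minimum, and correctly flag that the rank-$r$ realizability of the decoupled optima is the genuinely delicate step, which the paper only addresses by citation), but the substance is identical.
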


 Note that it is not so difficult to meet the condition that the embedding dimensionality should be sufficiently large. As is discussed in \cite{levy2014neural}, $r\sim 100$ can also achieve satisfactory performance while satisfying $r\ll N$. The degree of nodes in a scale-free network follows a power-law distribution, meaning the degree of most nodes is very small. The number of negative sample $k$ is usually set as $3\sim10$ in practice. $(\frac{d_id_j}{D})k \to 0$ holds for linked node pairs $(w_{ij} > 0)$, so that $S_{ij}^+ \to 1$. It means that the similarity of linked nodes will all approximate 1. In order to have a more distinct 
comprehension of this result, we proved the following theorem:

\begin{theorem}
Considering a scale-free network that the degree $d$ of each node follows the power-law as $d\sim P(d) \propto d^{-\alpha}, \, \alpha \in (2, 3)$, the number of edges is $|E|$ and the number of negative samples is $k$. Given a threshold $\gamma \in (0,1)$, the following equation holds:
\begin{equation}
\label{equ:s}
    P(S^+_{ij} \geq \gamma) \approx \frac{(\alpha-1)^2}{{-\alpha+2}}\frac{\log |E|}{2|E|}\cdot(R^{\alpha-2}-1),
\end{equation}
where $R=\frac{\gamma k}{2(1-\gamma)|E|}$.
\end{theorem}

The proof can be found in the Appendix. According to Eq. \eqref{equ:s}, we can calculate the expected ratio of linked nodes that their similarity $S^+$ is higher than a threshold $\gamma$ to the total number of edges:

\begin{equation}
    \mathbb{E}[\frac{\#(S^+ > \gamma)}{|E|}] = \frac{|V|(|V| - 1)P(S^+ > \gamma)}{2|E|}.
\end{equation}

Here is a specific example. Let $|V| = 1000$, $|E| = 3000$, $k = 4$, $\alpha = 2.5$, 
$\mathbb{E}[\frac{\#(S^+ > 0.9)}{|E|}]$ equals to $81.6\%$. It can be seen that the number of node pairs whose similarity falls in the Sigmoid saturation region accounts for a sufficiently large proportion. The problem of the Sigmoid saturation region is extremely severe to damage the quality of node embedding. Luckily, the APP method, which enhances its optimization power by incorporating momentum information, can achieve a remarkable result under this situation. Therefore, we believe it is one of the reasons that APP performs so well.

\begin{figure}[!ht]
    \centering
    \includegraphics[width=0.5\textwidth]{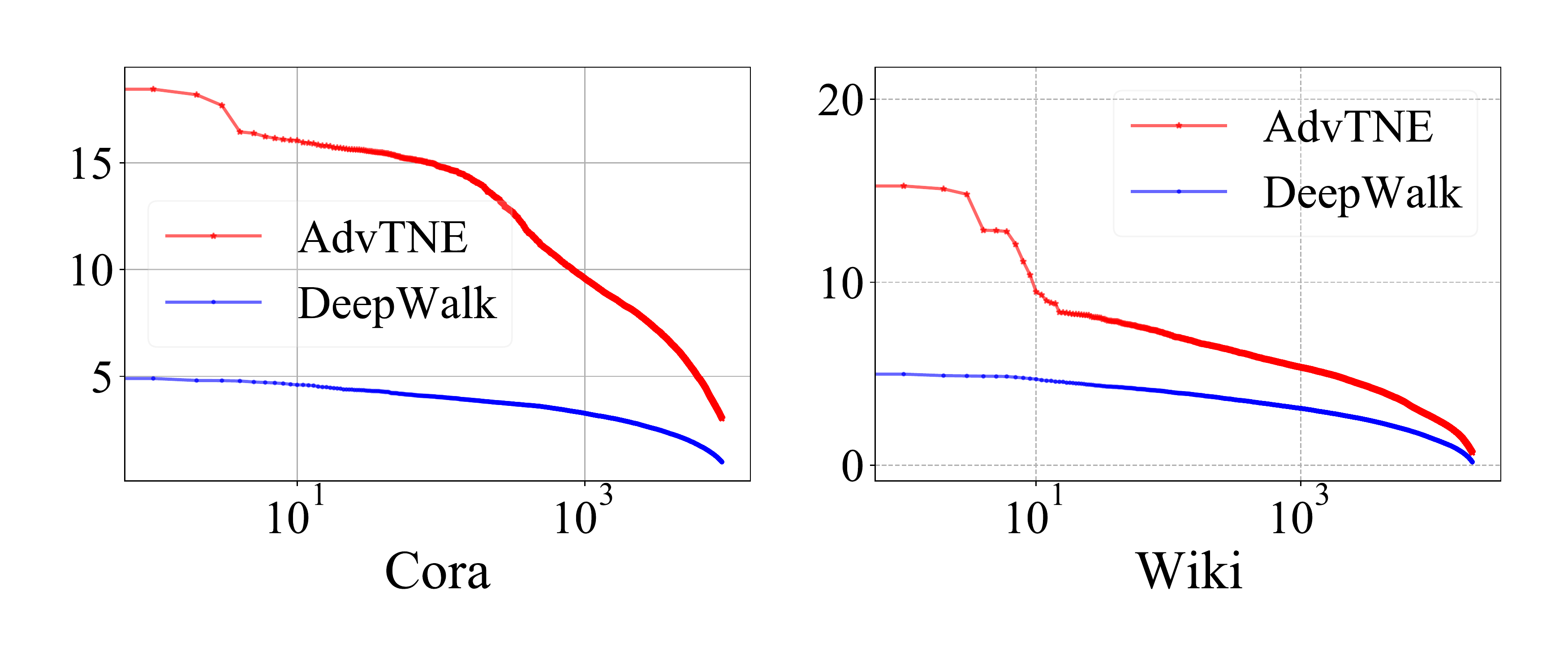}
    \caption{Comparison between DeepWalk and the corresponding AdvTNE method. The higher line represents high similarity values calculated by the node embeddings.}
    \label{fig:ppmi}
\end{figure}

To verify our assumption, we make a comparison between the original DeepWalk model and AdvTNE taking DeepWalk as the base. We follow the best experiment setting and more details are elaborated in Section. \ref{sec:exp}. 
For simplicity, we calculate the similarity of linked nodes based on their normalized representation vectors. The simplified similarity is $\tilde u_j' \cdot \tilde u_i$ where $\tilde u_j'$ and $\tilde u_i$ are normalized vectors of $\vec u_j'$ and $\vec u_i$, respectively.
We select 85\% node pairs that scores highest in the corresponding PPMI matrix and depict their simplified similarity. Node pairs with higher PPMI values are supposed to have higher simplified similarity scores. The results are displayed in Fig. \ref{fig:ppmi} and the simplified similarity values are sorted by their PPMI values for visualization.
As Figure. \ref{fig:ppmi} illustrates, on the paper citation network Cora and the page-page network Wiki, the similarity values of the node pair whose embeddings are learned by AdvTNE are much higher, indicating that the APP can indeed help the base model learn higher similarities of node pairs, thus promote the model
performance.

\section{Another Road to Better Performance}
Through the above analysis, we can conclude that (1) similarity of connected node pairs whose embeddings are learned through SGNE will concentrate on high values;
(2) the Sigmoid-based loss function (i.e., loss function with Sigmoid as the activation $\sigma(\cdot)$) in SGNE has an evident saturation region effect, which increases the difficulty of convergence for the node representations. Thus, the performance in downstream tasks is substantially affected by the imperfect node representations. 
(3) APP solves this problem from the perspective of \textbf{improving the optimization process.} Intuitively, the adversarial perturbations provide the ``inertial'' component that contributes to a better descent and alleviates being stuck in the saturation region. 

It also inspires us to find a more direct way to solve the problem of node similarities falling in the saturation region. In other words, since the Sigmoid-based loss function brings an evident saturation region effect, \textbf{can we seek its substitutes for reducing this effect and improve the quality of embedding vectors?} 
The substitute should satisfy some restrictions: (1) the activation must be bounded to guarantee the model convergence; (2) its saturation region should be small enough to enable an easier optimization to the optimal value. 

Since (1) and (2) are hard to satisfy simultaneously for common-used activation, we turn to periodic functions that have considerably small saturation regions. Besides, periodic functions having many equal optimal values will reduce the optimization difficulty as the parameters can be optimized to any one of them.
Motivated by this idea, we propose a new activation to \textbf{E}nhance the \textbf{A}dv\textbf{TNE} and we name it \textbf{EATNE}. The activation can be formulated as:
\begin{equation}
    \label{eq:sine-based}
    \begin{split}
        &T_{ij}^+ = \frac{1}{2}\left(1+\sin[\frac{\pi}{2} W_T(\Vec{u}'_j\otimes \Vec{u}_i)]\right)+\delta \\
        &T_{in}^- = \frac{1}{2}\left(1-\sin[\frac{\pi}{2} W_T(\Vec{u}'_n\otimes \Vec{u}_i)]\right)+\delta \\
        L(G|\Theta) =&- \!\! \sum_{v_i \in V} \!\! \sum_{v_j \in N_S(v_i)} \!\! w_{ij}\left(\log T_{ij}^+ \! + \! k \cdot \mathbb{E}_{v_n \sim P_{n}(v)}[\log T_{in}^-]\right).
    \end{split}
\end{equation}
Here, we denote the node similarity between node $v_i$ and $v_j$ in EATNE as $T_{ij}^+$. $W_T \in \mathbb{R}^{1\times r}$ is a learnable parameter and $\otimes$ denotes the element-wise multiplication between the embeddings of two nodes. $\delta > 0$ is a small constant to ensure that $T_{ij}^+$ and $T_{in}^-$ is greater than 0 as $\log 0 \to -\infty $. Inspired by \cite{siren}, we apply batch normalization on the term $\Vec{u}'_j\otimes \Vec{u}_i$ and $\Vec{u}'_n\otimes \Vec{u}_i$ so that they follows the uniform distribution $\mathcal{U}(-1,1)$. $W_T$ is also initilized as $W_T\sim \mathcal{U}(-\sqrt{6/r},\sqrt{6/r})$ to ensure that $W_T(\Vec{u}'_j\otimes \Vec{u}_i)\sim \mathcal{N}(0,1)$ and $W_T(\Vec{u}'_n\otimes \Vec{u}_i)\sim \mathcal{N}(0,1)$. Here $\mathcal{N}(\cdot,\cdot)$ is the normal distribution. Eq. \eqref{eq:sine-based} can be viewed as a \textbf{Sine-based loss function}.

It is not hard to verify that this activation satisfies the first restriction. To examine the second restriction with this activation and to show the superiority of the Sine-based loss function, we compare the learning progress with the Sigmoid-based one. With Sigmoid-based loss function, the optimal value can be reached by
\begin{equation}
\begin{split}
    &S^+_{ij} = \frac{1}{1+\exp(-\Vec{u}'_j\cdot\Vec{u}_i)}=1 \Rightarrow \Vec{u}'_j\cdot\Vec{u}_i\to\infty.
\end{split}
\end{equation}

For Sine-based loss function, the optimal value is 
\begin{equation}
\begin{split}
    \frac{1}{2}&\left(1+\sin[\frac{\pi}{2} W_T(\Vec{u}'_j\otimes \Vec{u}_i)]\right)+\delta = 1\Rightarrow\\
    &W_T(\Vec{u}'_j\otimes\Vec{u}_i)=\frac{2}{\pi }\arcsin(1-2\delta).
\end{split}
\end{equation}
In practice, $\Vec{u}'_j\cdot\Vec{u}_i$ is hard to reach $\infty$ for computer precision. In addition, when $\Vec{u}'_j\cdot\Vec{u}_i \to \infty$, the norm of $\Delta\Theta$ meets the expression that $||\Delta\Theta|| \sim S^+_{ij}(1-S^+_{ij})\approx 0$, indicating that the parameters almost cannot be updated after certain iteration. However, the range of arcSine function is finite, enabling the fast convergence of the node representation vectors. Besides, when $T^+_{ij} \to 1$, it holds that $||\Delta \Theta|| \sim \sqrt{\delta(1-\delta)}$, indicating the norm of $\Delta\Theta$ is much bigger than that of Sigmoid-based loss function. In conclusion, our Sine-base loss function satisfies the two necessary 
restrictions and it is supposed to be effective on network embedding.

\begin{table}[!ht]
\small
\centering
\caption{Statistics of benchmark datasets}
\label{tab:dataset}
\begin{tabular}{c|cccc}
\toprule
Datasets & \begin{tabular}[c]{@{}c@{}} \# Nodes\end{tabular} & \begin{tabular}[c]{@{}c@{}}\# Edges\end{tabular} & \begin{tabular}[c]{@{}c@{}}\# Avg.Degree\end{tabular} & \begin{tabular}[c]{@{}c@{}}\# Labels\end{tabular} \\ \midrule
Cora & 2708 & 5278 & 1.95 & 7 \\
Citeseer & 3264 & 4551 & 1.39 & 6 \\
Wiki & 2363 & 11596 & 4.91 & 17 \\
PubMed & 19717 & 88648 & 4.50 & 3 \\ 
\bottomrule
\end{tabular}
\end{table}
\section{Experiment}
\label{sec:exp}
We conduct experiments on real-world networks on node classification and link prediction task to verify our analysis: 
\begin{itemize}
    \item \textbf{Q1}: Can APP achieve results comparable to Momentum optimization if APP and Momentum are similar in the formula?
    \item \textbf{Q2}: How does our new activation perform compared to the state-of-the-art network embedding models?
\end{itemize}

\subsection{Experiment Setup} 
\subsubsection{Datasets}
We conduct experiments on four real-world datasets. 
Cora, Citeseer, and PubMed \cite{cora} are all paper citation networks where nodes correspond to paper and edges represent citation links. Wiki \cite{wiki} is a network with many web pages as its nodes and hyperlinks between web pages as its edges. We delete nodes with self-loops and zero degrees on original datasets. Statistics of these datasets are summarized in Tab. \ref{tab:dataset}.

\begin{table*}[!ht]
\centering
\caption{Comparison of AS and NM on Cora, Wiki and Citeseer, revealing the similarity of APP and Momentum in practice.}
\label{tab:momentum}
\resizebox{\textwidth}{!}{
\begin{tabular}{c|c|c|c|c|c|c|c|c|c|c|c|c|c|c|c|c|c|c|c}
\toprule

Ratio        & \textbf{Mean}  & 1\%            & 2\%   & 3\%   & 4\%   & 5\%   & 6\%   & 7\%   & 8\%   & 9\%   & 10\%  & 20\%  & 30\%  & 40\%  & 50\%  & 60\%  & 70\%  & 80\%  & 90\%  \\ \midrule
Cora\_AS     & 74.48 & 55.06          & 62.64 & 65.38 & 67.83 & 69.54 & 69.51 & 71.22 & 71.99 & 73.24 & 74.78 & 78.21 & 80.27 & 81.84 & 82.77 & 83.38 & 83.51 & 84.06 & 85.42 \\
Cora\_NM     & 76.32 & 63.32          & 67.81 & 69.20 & 71.15 & 73.37 & 73.55 & 74.51 & 75.36 & 75.54 & 76.35 & 79.88 & 80.79 & 81.47 & 81.74 & 82.05 & 81.91 & 82.32 & 83.51 \\ \midrule
Wiki\_AS     & 56.71 & 34.19          & 44.69 & 48.18 & 51.02 & 51.77 & 51.96 & 54.37 & 54.62 & 55.77 & 56.36 & 60.21 & 62.71 & 63.65 & 64.80 & 65.64 & 66.04 & 67.34 & 67.43 \\
Wiki\_NM     & 56.87 & 36.25          & 46.56 & 49.59 & 52.25 & 52.57 & 54.70 & 55.39 & 55.05 & 56.68 & 57.01 & 59.76 & 62.35 & 62.93 & 63.42 & 64.22 & 64.37 & 64.93 & 65.57 \\ \midrule
Citeseer\_AS & 51.47 & 36.45 & 40.45 & 42.76 & 43.78 & 46.28 & 47.31 & 48.00 & 48.54 & 50.55 & 50.97 & 55.11 & 56.86 & 58.29 & 58.71 & 59.55 & 60.83 & 60.32 & 61.68  \\
Citeseer\_NM & 51.56 & 36.28          & 40.43 & 43.38 & 44.46 & 46.88 & 47.36 & 48.74 & 49.02 & 50.71 & 50.79 & 55.07 & 57.47 & 58.41 & 58.93 & 59.82 & 60.59 & 59.98 & 59.69 \\ 
\bottomrule
\end{tabular}%
}
\end{table*}

\begin{table*}[!h]
\centering
\caption{Accuracy(\%) of multi-class classification on Cora}
\label{cora}
\resizebox{\textwidth}{!}{%
\begin{tabular}{c|c|c|c|c|c|c|c|c|c|c|c|c|c|c|c|c|c|c|c}
\toprule

Ratio & \textbf{Mean} &1\%   & 2\% & 3\% & 4\% & 5\% & 6\% & 7\% & 8\% & 9\% & 10\% & 20\% & 30\% & 40\% & 50\% & 60\% & 70\% & 80\% & 90\% \\ \midrule
DeepWalk    & 74.55   & 59.57 & 63.54  & 66.75  & 69.43  & 70.54  & 70.82  & 71.77  & 72.66  & 73.60  & 74.43   & 77.75   & 79.30   & 80.50   & 80.89   & 82.31   & 82.12   & 82.32   & 83.58   \\
LINE        & 66.02  & 50.93 & 56.16  & 60.55  & 62.14  & 63.89  & 63.46  & 65.09  & 65.07  & 66.18  & 66.53   & 68.73   & 70.15   & 70.31   & 71.07   & 70.93   & 72.20   & 72.16   & 72.80   \\
node2vec    & 74.71  & 59.18 & 64.03  & 66.94  & 69.20  & 70.57  & 71.00  & 71.45  & 72.60  & 73.33  & 74.16   & 77.57   & 79.93   & 80.84   & 81.82   & 82.14   & 82.85   & 83.03   & 84.13   \\
GF       & 41.35 & 20.79 & 29.76 &    33.65 &    34.68 &    36.13 &    37.65 &    39.01 &    38.93 &    40.28 &    40.53 &    44.85 &    46.80 &    48.45 &    49.75 &    49.63 &    50.23 &    51.75 &    51.40  \\
GraRep   &71.45 & 55.01 &    60.85 &    66.12 &    68.09 &    67.26 &    70.77 &    70.81 &    71.27 &     71.47 &    72.73 &     74.42 &  75.68 & 75.70 &    76.13 & 78.13 &     77.28 &     77.43 &     76.97 \\

GraphSage     &53.62 & 31.51 &     38.79 &    44.29 & 45.07 &    47.29 &    49.77 &     50.43 &     51.77 &    53.92 &     54.18 &     58.24 &     59.59 &     62.07 &     62.89 &     63.55 &     63.79 &     62.88 &     65.09  \\

AdvTNE      &75.14    & 58.95 & 65.18 & 67.21 & 68.62 & 71.19 & 71.45 & 72.41 & 73.39 & 73.81 & 75.15 & 77.88 & 79.94 & 81.03 & 82.22 & \textbf{82.97} & \textbf{83.11} & \textbf{83.71} & \textbf{84.24} 
  \\


Cleora & 70.99 & 54.29 & 60.97 & 62.62 & 65.68 & 68.51 & 68.65 & 68.82 & 71.17 & 71.04 & 72.08 & 73.85 & 75.62 & 76.57 & 76.82 & 76.98 & 77.48 & 77.90 & 78.75 \\
  
  \midrule
EATNE & \textbf{76.93}&\textbf{60.30} & \textbf{66.28} & \textbf{70.56} & \textbf{72.43} & \textbf{74.96} & \textbf{74.60} & \textbf{75.44} & \textbf{76.64} & \textbf{77.33} & \textbf{77.55} & \textbf{80.12} & \textbf{81.09} & \textbf{81.57} & \textbf{82.47} & 82.90   & 83.06   & 83.47   & 84.02   \\

\bottomrule
\end{tabular}
}
\end{table*}

\begin{table*}[!h]
\centering
\caption{Accuracy(\%) of multi-class classification on Citeseer}
\label{Citeseer}
\resizebox{\textwidth}{!}{%
\begin{tabular}{c|c|c|c|c|c|c|c|c|c|c|c|c|c|c|c|c|c|c|c}
\toprule

Ratio & \textbf{Mean} & 1\%   & 2\% & 3\% & 4\% & 5\% & 6\% & 7\% & 8\% & 9\% & 10\% & 20\% & 30\% & 40\% & 50\% & 60\% & 70\% & 80\% & 90\% \\ \midrule
Deepwalk & 51.76 &	38.53 &	41.53 &	43.63 &	44.41 &	46.62 &	47.25 &	48.35 &	48.63 &	49.59 &	50.65 &	55.06 &	57.46 &	58.15 &	59.15 &	59.43 &	61.21 &	60.46 &	61.50 \\
LINE          & 41.94 & 35.19 & 36.55  & 39.06  & 38.49  & 40.34  & 40.95  & 41.33  & 40.79  & 41.80  & 41.87   & 42.99   & 43.77   & 44.40   & 45.20   & 44.98   & 45.19   & 45.91   & 46.09   \\
node2vec      & 52.08 & 39.62 & 42.19  & 44.54  & 45.62  & 46.92  & 48.17  & 48.63  & 49.36  & 50.27  & 50.91   & 55.08   & 57.53   & 58.55   & 59.29   & 59.77   & 60.54   & 60.28   & 60.21   \\
GF            & 30.21 &  21.00 &  23.22 &  24.54 &  25.96 &  26.20 &  27.61 &  28.07 &  28.01 &  28.22 &  29.12 &  31.81 &  33.80 &  34.92 &  35.00 &  35.54 &  35.67 &  37.55 &  37.61 \\       
GraRep        & 48.17 &  35.37 &  40.10 &  41.78 &  44.79 &  45.59 &  46.20 &  47.93 &  47.38 &  47.72 &  47.87 &  50.39 &  51.88 &  52.05 &  53.20 &  53.00 &  54.00 &  54.80 &  53.08 \\
GraphSage     & 30.97 & 20.94 &  23.54 &  24.04 &  25.93 &  26.41 &  26.96 &  28.62 &  28.29 &  29.58 &  30.52 &  32.97 &  35.40 &  35.60 &  37.43 &  36.88 &  37.84 &  37.43 &  39.02 \\
AdvTNE        & 53.14 & 39.02 & 41.94 & 44.77 & 45.21 & 47.43 & 49.06 & 49.72 & 51.09 & 51.61 & 52.72 & 57.06 & 59.56 & 60.14 & 60.64 & 61.30 & 61.87 & 61.23 & 62.14 
   \\ 
Cleora & 52.85 &	\textbf{42.14} &	44.10 &	\textbf{48.19} &	\textbf{48.67} &	50.39 &	51.24 &	51.32 &	52.61 &	52.60 &	52.93 &	54.71 &	55.97 &	56.17 &	57.04 &	57.50 &	58.67 &	58.18 &	58.93 \\	
   
   \midrule
EATNE         & \textbf{54.87} & 38.10  & \textbf{44.37} & 47.15 & 48.62 & \textbf{50.98} & \textbf{51.95} & \textbf{53.22} & \textbf{53.44} & \textbf{54.55} & \textbf{54.82} & \textbf{58.05} & \textbf{59.91} & \textbf{60.63} & \textbf{60.96} & \textbf{61.85} & \textbf{62.66} & \textbf{62.89} & \textbf{63.46} \\

\bottomrule
\end{tabular}
}
\end{table*}

\begin{table*}[!h]
\centering
\caption{Accuracy(\%) of multi-class classification on Wiki}
\label{wiki}
\resizebox{\textwidth}{!}{
\begin{tabular}{c|c|c|c|c|c|c|c|c|c|c|c|c|c|c|c|c|c|c|c}
\toprule

Ratio & \textbf{Mean} & 1\%   & 2\% & 3\% & 4\% & 5\% & 6\% & 7\% & 8\% & 9\% & 10\% & 20\% & 30\% & 40\% & 50\% & 60\% & 70\% & 80\% & 90\% \\ \midrule
DeepWalk   & 57.81 &
34.14 &	46.01 &	49.35 &	52.02 &	52.34 &	54.51 &	55.41 &	56.30 &	57.67 &	58.69 &	61.85 &	63.49 &	64.65 &	65.52 &	66.06 &	66.29 &	67.67 &	68.69 \\

LINE          & 48.59 &30.67 & 39.73  & 42.58  & 45.05  & 46.00  & 46.89  & 48.18  & 47.64  & 48.81  & 49.09   & 51.71   & 52.88   & 53.03   & 53.61   & 53.82   & 54.40   & 54.44   & 56.16   \\
node2vec      & 57.47 &35.40 & 45.76  & 49.20  & 51.13  & 51.10  & 53.87  & 55.46  & 55.55  & 57.14  & 57.58   & 60.71   & 63.54   & 64.35   & 65.25   & 65.97   & 66.21   & 67.55   & 68.69   \\
GF            & 41.84&20.37 &  26.45 &  30.48 &  34.22 &  37.21 &  37.69 &  38.09 &  40.82 &  41.25 &  41.71 &  46.42 &  48.65 &  49.97 &  50.80 &  51.67 &  53.03 &  52.01 &  52.28   \\
GraRep        & 57.79&32.57 &  44.08 &  49.00 &  53.62 &  55.26 &  55.65 &  56.01 &  57.90 &  58.11 &  59.00 &  62.17 &  64.20 &  64.64 &  65.00 &  65.41 &  65.77 &  65.64 &  66.16   \\
GraphSage     & 46.91&25.38 &  33.20 &  36.36 &  39.86 &  40.47 &  43.26 &  45.15 &  45.59 &  46.51 &  47.10 &  51.04 &  53.61 &  54.38 &  54.48 &  56.21 &  56.73 &  57.32 &  57.72  \\
AdvTNE & 57.83 & \textbf{35.56} &45.89 &49.35 &52.15 &51.67 &53.75 &54.77 &55.66 &56.52 &56.85 &61.29 &63.57 &65.50 &66.12 &66.72 &67.31 &\textbf{69.15} &69.07\\
Cleora & 56.66 &	33.11&	44.08&	49.42&	50.59&	51.65&	53.77&	56.09&	55.52&	57.02&	57.78&	60.10&	62.12&	63.19&	63.59&	63.66&	65.42&	66.49&	66.24 \\
\midrule
EATNE         & \textbf{59.14}&34.69 & \textbf{47.13} & \textbf{50.93} & \textbf{53.80} & \textbf{53.41} & \textbf{56.08} & \textbf{57.83} & \textbf{57.29} & \textbf{58.81} & \textbf{59.37} & \textbf{62.36} & \textbf{64.99} & \textbf{65.98} & \textbf{66.94} & \textbf{67.29} & \textbf{67.86} & 68.56 & \textbf{71.18} \\
\bottomrule
\end{tabular}%
}
\end{table*}

\subsubsection{Baseline Methods} 
We compare EATNE with the following baseline models: 

\begin{itemize}
    \item Graph Factorization (GF) \cite{gf}: GF utilizes a stochastic gradient descent technique to factorizes the adjacency matrix, and it can scale to large networks.
    \item DeepWalk \cite{Perozzi2014DeepWalk}: DeepWalk obtains node embedding by applying the skip-gram model on node sequences sampled from the truncated random walk.
    \item LINE \cite{Tang2015LINE}: LINE takes into consideration node co-occurrence probability and node conditional probability to preserve network structural proximities. The problem of expensive computation is also alleviated through the negative sampling approach.
    \item node2vec \cite{grover2016node2vec}: node2vec samples node sequences with a more flexible method and balances the local and global structural properties.
    \item GraRep \cite{Cao2015GraRep}: GraRep learns node embeddings by using the SVD technique to different k-step probability transition matrix and concatenating all k-step representations.
    \item GraphSage \cite{hamilton2017inductive}: GraphSage collects neighborhood information by constructing a local computing graph and inductively learns node embeddings.
    
    \item AdvTNE \cite{dai2019adversarial}: AdvT adopts adversarial training methods on network embedding by adding perturbation to embedding, which can be treated as a regularization term.
    \item Cleora \cite{rychalska2021cleora}: Cleora is a efficient unsupervised network embedding method that does not optimize an explicit objective and sample positive or negative examples. 
\end{itemize}

\subsubsection{Experiment Settings}

We apply AdvTNE and EATNE on DeepWalk and the walk length, walks per node, window size, negative size, regularization strength and batch size are set to 40, 1, 5, 5, 1 and 2048, respectively. The dimension of embedding vectors is 128 and we training each model for 100 epochs. We utilize the recommended hyper-parameters and optimization methods of baseline models mentioned in the corresponding paper. Random search \cite{rs} is applied to find the optimal adversarial noise level $\epsilon$, regularization factor $\lambda$ and learning rate.
 The search range for $\epsilon$ is 0.1$\sim$5. We set 1e-5$\sim$1 as the search range for learning rate and 1e-2$\sim$1e3 for the regularization factor $\lambda$. 

\subsubsection{Downstream Task}
We select link prediction and node classification as the downstream tasks to evaluate the quality of representation vectors. The experiment results are averaged over ten runs on both tasks. In link prediction, $80\%$ of the edges are randomly sampled as the positive training samples with the same amount of negative training samples, i.e., node pair without direct edge. The rest $20\%$ edges together with two times of negative samples are sampled to construct the test set. AUC is adopted to measure the performance of link prediction as we train an $L_2$-SVM classifier to predict links with the edge features like other works \cite{dai2019adversarial}. Here we use the Hadamard product of representation vectors from the endpoints of the same edge as the edge features. As for node classification, 18 training ratios, ranging from $1\%\sim9\%$ and $10\%\sim90\%$, are set to train the support vector classifier. It is designed to demonstrate the ability under different proportions of training data so we utilize accuracy as the metric.

\subsection{Similarity of APP and Momentum (Q1)}

To support our claim that AdvTNE is a Momentum-like optimization method,
we compare two models on the node classification task: AdvtNE optimized with SGD (AS) and Non-AdvTNE optimized with Momentum (NM, i.e., DeepWalk only). The experiments are conducted on Cora, Wiki and Citeseer. The adversarial noise level $\epsilon$ is 0.9 for Cora, 1.1 for Citeseer and 0.6 for Wiki, respectively. We set 0.001 as learning rate and 1.0 as the regularization factor $\lambda$ for AS following the original paper. For NM, learning rate is 0.1040 for Wiki, 0.0276 for Citeseer and 0.4524 for Cora. As shown in Table. \ref{tab:momentum}, NM can achieve comparable results compared with AS. When AdvTNE is optimized by SGD, based on Theorem. \ref{theorem:optimization}, it can be viewed as using APP to optimize DeepWalk, revealing that APP is indeed similar to Momentum from these results. Note that original AdvTNE is optimized with Adam and its performance is slightly higher. We can also discover that NM surpasses AS in the low training ratio case, demonstrating the strong generalization ability of Momentum in the semi-supervised learning scenario.

\subsection{Remarkable results of EATNE (Q2)}

\subsubsection{Node Classification}
In this section, we conduct a multi-class classification on three graph datasets with different training ratios to simulate the semi-supervised scene. The results are organized and presented on Table. \ref{cora} - \ref{wiki}. It can be observed that EATNE achieves impressive performance, especially the best average classification accuracy on 18 training ratios for all three graph datasets. The relative average error rate of node classification is reduced by 7.2\% on Cora, 3.7\% on Citeseer and 3.1\% on Wiki. Moreover, we obtain the best results on 45/54 trials, demonstrating the benefit of Sine-based loss function in semi-supervised scenarios.
Note that although Cora and Citeseer are similar in network scale and number of classes, the imbalanced class distribution contributes to the difference of classification accuracy between Cora and Citeseer.
By the way, to simulate the scene that the graphs only contain structure information, we generate random features as inputs for GraphSage, or it will be unfair to other methods if we use original node features.

\subsubsection{Link Prediction}
Link prediction plays an essential role in real-world applications for the capacity of mining underlying relations between entities. We carry out link prediction on four graph datasets to demonstrate the effectiveness of our methods against other state-of-the-art methods. The results are summarized on Table. \ref{tab:auc}. As we can see that EATNE achieves competitive results on all four datasets compared with other methods. Specifically, based on AdvTNE, the AUC score increases by $2.63\%$ on Cora, $6.67\%$ on Citeseer, $8.22\%$ on Wiki and $8.46\%$ on PubMed, which shows that the Sine-based loss function can further promote performance of models applying APP. Besides, EATNE also performs well on the largest network datasets PubMed, revealing its great potential in scaling to large-scale graphs.

\section{Related Work}
\textbf{Network Embedding} can be categorized into inductive learning methods \cite{hamilton2017inductive} and transductive learning methods \cite{Tang2015LINE,Cao2015GraRep,grover2016node2vec}. Inductive learning optimizes parameters of a well-designed neural network, following a node attributes-based message passing mechanism. Directly optimizing the node embeddings, transductive learning performs pretty well on the scene of unsupervised learning without node attributes and becomes the focus of this paper. Inspired by word embedding methods, some methods based on the skip-gram model \textit{word2vec} are proposed \cite{Perozzi2014DeepWalk, Tang2015LINE, grover2016node2vec}. Besides, \cite{levy2014neural} proved the equivalence between skip-gram models with negative-sampling and matrix factorization, which led to new proximity metrics under the matrix factorization proximity framework \cite{yang2015network, Cao2015GraRep}.

\begin{table}[!h]
\small
\centering
\caption{AUC score for link prediction}
\label{tab:auc}
\resizebox{0.5\textwidth}{!}{%
\begin{tabular}{c|c|c|c|c}
\toprule

Dataset & Cora & Citeseer & Wiki & PubMed \\ \midrule
DeepWalk & 0.628$\pm$0.014 & 0.528$\pm$0.010 & 0.525$\pm$0.007 & 0.503$\pm$0.001 \\
LINE & 0.606$\pm$0.015 & 0.514$\pm$0.009 & 0.506$\pm$0.004 & 0.502$\pm$0.002 \\
node2vec & 0.626$\pm$0.011 & 0.522$\pm$0.010 & 0.526$\pm$0.004 & 0.506$\pm$0.003 \\
GF & 0.500$\pm$0.003 & 0.505$\pm$0.011 & 0.507$\pm$0.003 & 0.502$\pm$0.004 \\
GraRep & 0.501$\pm$0.002 & 0.502$\pm$0.009 & 0.506$\pm$0.005 & 0.503$\pm$0.003 \\
GraphSage & 0.504$\pm$0.002 & 0.500$\pm$0.006 & 0.501$\pm$0.005 & 0.500$\pm$0.002 \\
AdvTNE & 0.647$\pm$0.008 & 0.520$\pm$0.010 & 0.532$\pm$0.004 & 0.523$\pm$0.006 \\
Cleora & 0.641$\pm$0.025&0.526$\pm 0.022$ &0.524$\pm$0.026& 0.512$\pm$0.004 \\

\midrule
\textbf{EATNE}&\textbf{0.665$\pm$0.004} &\textbf{0.544$\pm$0.008} &\textbf{0.566$\pm$0.005} &\textbf{0.551$\pm$0.002} \\ 
\bottomrule
\end{tabular}
}
\end{table}

\noindent \textbf{Adversarial Training} is introduced in Computer Vision, Speech Recognition and Natural Language Processing for promotion of model robustness \cite{Miyato2015distributional,madry2017towards,shafahi2019adversarial,sun2018domain,liu2019adversarial,drexler2018combining,miyato2018virtual,zhu2019freelb,liu2020adversarial}.
\cite{Miyato2015distributional} proposes virtual adversarial training on the basis of optimization of KL-divergence based robustness against local perturbations.
Adversarial training also achieves impressive performance on supervised and semi-supervised learning scenes \cite{miyato2018virtual} for the design of unlabelled adversarial direction. \cite{shaham2018understanding} explains its mechanism by the perspective of robust optimization. 
Researchers introduce Adversarial Training into network embedding for its great success. Some works apply perturbations to network structures \cite{dropedge,topology},
others on the node or edge attributes \cite{edgefeature,latentfeature}.
It is worth mentioning that Adversarial training based network embedding methods are different from Generative Adversarial Network (GAN) based ones \cite{wang2018graphgan,hong2019gane,progan}.
GAN based network embedding methods utilize a mini-max game between connectivity generators and corresponding discriminators to obtain representations.

Although some researches introduce Adversarial Training to generate adversarial perturbations on networks in various ways, few of them pay attention to perturbations in parameter space.
AdvTNE \cite{dai2019adversarial} integrates adversarial training with network embedding in a different manner that perturbations are added not in input space but parameter space. Inspired by this idea, in computer vision, some researches introduce adversarial perturbations on parameters (APP) for better generalization ability. \cite{foret2020sharpness} adopts APP to seek for parameters where the training loss of the entire neighborhoods are very low; \cite{wu2020adversarial} utilizes APP from the perspective of flattening the weight loss landscape; \cite{zheng2020regularizing} regards APP as a regularization to reach a flatter minima of a maximized empirical loss. To summarize, APP in AdvTNE has broadened the way of designing adversarial training so that it is well worth investigating the mechanism behind it, which is the main idea of this paper.

\section{Conclusion}
In this paper, we study APP theoretically from an optimization perspective, explore the reason for its impressive effectiveness, and propose a new activation based on AdvTNE. We first prove that APP can be interpreted as an optimization method and analyze its relationship with Momentum. One reason for its outstanding performance is that it provides momentum information to accelerate optimization on the saturation region. Detailed experiments verify the similarity between APP and Momentum on node classification tasks. To address the saturation region problem caused by Sigmoid activation, we design a new activation to obtain high-quality representations more easily.  
Our method achieves satisfactory results on four data sets on both node classification and link prediction tasks, which practically demonstrates the superiority of our activation. 
Furthermore, the Sine-based activation is expected to address limitations brought by Sigmoid-based activation in similar scenarios. Especially, it is reasonable to promote it to word embedding methods as they have similar paradigms and they both obey the power-law distribution. 
We will apply APP as a new optimization method and the Sine-based activation to tackle the Sigmoid saturation region problem in the aforementioned fields in future work.

\bibliographystyle{ACM-Reference-Format}
\bibliography{sample-base}

\clearpage
\setcounter{theorem}{0}
\section{Appendix}
\subsection{Proof of Theorem 1}
\begin{theorem}
\label{theorem:optimization}
Given an unconstrained differentiable loss function $L(\Theta)$, applying APP strategy with gradient descent is equivalent to iterative optimization based on the following strategy when updating the parameter $\Theta$:

\begin{equation}
\begin{split}
\label{equ:advtne_strategy}
    \Theta_{t+1} &= \Theta_t - \epsilon \Delta \Theta_t,\\
    \Delta \Theta_t =  \nabla & L(\Theta_t) + \lambda \nabla L(\Theta_t + n^{(adv)}_t)
\end{split}
\end{equation}

where $\Theta_t$ is the value of $\Theta$ in the $t$-th iteration, $\epsilon$ is the learning rate, and $n^{(adv)}_t$ is the adversarial perturbation which equals $\mathop{\arg\max}_{n, ||n|| \leq \rho} L(\Theta_t + n)$.
\end{theorem}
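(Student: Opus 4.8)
The plan is to unpack the per-iteration optimization that the APP strategy actually performs and show that a single gradient-descent step on it reproduces Eq.~\eqref{equ:advtne_strategy}. At iteration $t$ the algorithm proceeds in two phases: first it freezes the current parameters $\hat\Theta = \Theta_t$ and solves the inner maximization to obtain $n^{(adv)}_t = \mathop{\arg\max}_{n,\,||n||\le\rho} L(\Theta_t + n)$; then it treats $n^{(adv)}_t$ as a fixed constant and takes a descent step on the outer objective $\mathcal{L}_t(\Theta) = L(\Theta) + \lambda L(\Theta + n^{(adv)}_t)$. The first step I would take is to write this two-phase scheme out explicitly and record the stop-gradient convention, namely that $n^{(adv)}_t$ carries no dependence on $\Theta$ during the outer update, which is the standard adversarial-training practice of \cite{goodfellow2014explaining} already noted after the statement.

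Next I would differentiate $\mathcal{L}_t$ with respect to $\Theta$. The first summand contributes $\nabla L(\Theta)$ directly. For the second summand, since $n^{(adv)}_t$ is held constant, the chain rule reduces $\nabla_\Theta L(\Theta + n^{(adv)}_t)$ to the gradient of $L$ evaluated at the shifted point, i.e.\ $(\nabla L)(\Theta + n^{(adv)}_t)$. Setting $\Theta = \Theta_t$ and assembling the two pieces gives $\Delta\Theta_t = \nabla L(\Theta_t) + \lambda\,(\nabla L)(\Theta_t + n^{(adv)}_t)$, and substituting into the plain gradient-descent recursion $\Theta_{t+1} = \Theta_t - \epsilon\,\Delta\Theta_t$ yields exactly Eq.~\eqref{equ:advtne_strategy}. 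Differentiability of $L$, assumed in the hypothesis, is all that this step requires.

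The only genuinely delicate point is the justification for treating $n^{(adv)}_t$ as constant while differentiating, since the inner maximizer does in principle depend on $\Theta$. I would address this in two complementary ways. The direct argument is that the algorithm, by construction, recomputes $n^{(adv)}$ only in the inner phase and freezes it for the outer update, so the equivalence holds verbatim for the method as implemented. As a more robust justification I would invoke Danskin's theorem: even if one differentiated the full composite objective $L(\Theta) + \lambda\max_{||n||\le\rho} L(\Theta+n)$, the gradient of the max term equals $(\nabla L)$ evaluated at the maximizer $n^{(adv)}_t$, so the envelope contribution coincides with the stop-gradient expression and the update is unchanged. This shows the identification is not an artifact of the stop-gradient heuristic but is correct for the true min-max objective as well, which is the observation I expect to be the crux of a fully rigorous argument.
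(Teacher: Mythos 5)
Your proposal is correct and follows essentially the same route as the paper's own proof: fix $n^{(adv)}_t$ from the inner maximization, differentiate the combined objective $L(\Theta) + \lambda L(\Theta + n^{(adv)}_t)$ with $n^{(adv)}_t$ held constant, and read off the gradient-descent update. Your additional appeal to Danskin's theorem to justify the stop-gradient convention is extra rigor the paper does not supply (its proof simply treats $n^{(adv)}_t$ as fixed by construction), and it strengthens rather than alters the argument.
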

\begin{proof}
 Given the parameter at current step $t$ as $\Theta_t$, firstly we expect to maximize $L(\Theta_t+n)$ in the adversarial perturbation generation step:
 \begin{equation}
     n^{(adv)}_t=\mathop{\arg\max}_{n, ||n|| \leq \rho} L(\Theta_t + n).
\end{equation}
 As soon as $n^{(adv)}_t$ is determined, the objective is a function only related to $\Theta_t$ as $L(\Theta_t)+\lambda L(\Theta_t+n^{(adv)}_t)$ 
 
We optimize this objective through gradient descent with learning rate $\epsilon$:
 \begin{equation}
     \begin{split}
         \Theta_{t+1} &= \Theta^{t} -  \epsilon\Delta \Theta_{t},\\
 \Delta \Theta_{t} = \nabla &L(\Theta_{t}) +\lambda \nabla L(\Theta_{t} + n^{(adv)}_t).
     \end{split}
 \end{equation}
This updating procedure can be considered from another aspect: considering the objective function $L(\Theta_t)$, when updating $\Theta_t$, an additional term $\lambda \nabla L(\Theta_{t} + n^{(adv)}_t)$ is also utilized besides the gradient $\nabla L(\Theta_{t})$. As a result, it is reasonable to regard it as an optimization method.
Thus, the theorem is proved.
\end{proof}

\subsection{Proof of Theorem 2}
\begin{theorem}
\label{theorem:advt_approx}
If (1) $L$ is L-smooth, i.e., $||\nabla L (\Theta_t) - \nabla L(\Theta_{t-1})||\leqslant l ||\Theta_t - \Theta_{t-1}||$ 
and (2) $\exists t, s.t. ||\nabla L(\Theta_t)||\leqslant\delta$ hold where $l,\delta$ are both finite positive constants, then the following inequality holds:
\begin{equation}
\label{eq:max}
    \left|\max_{||n||\leq\rho}\{L(\Theta_t+n)\}-L(\Theta_{t-1}) \right| \leqslant (1+ \lambda \epsilon l)\rho \delta + \epsilon(1+\lambda) \delta^2.
\end{equation}
\end{theorem}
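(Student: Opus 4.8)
The plan is to bound the single quantity $L(\Theta_t + n^*) - L(\Theta_{t-1})$, where $n^* = \mathop{\arg\max}_{\|n\|\le\rho} L(\Theta_t+n)$, by relating $\Theta_t$ to $\Theta_{t-1}$ through the update rule of Theorem~1 and then applying the mean value theorem along a \emph{single} segment. Because $\{n:\|n\|\le\rho\}$ is compact and $L$ is continuous, the maximizer $n^*$ exists and satisfies $\|n^*\|\le\rho$, so $\max_{\|n\|\le\rho}L(\Theta_t+n)=L(\Theta_t+n^*)$; handling the maximum this way lets the absolute value on the left-hand side of \eqref{eq:max} be absorbed automatically once norms are taken.

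First I would substitute the optimization step. By Theorem~1 (with indices shifted by one), $\Theta_t=\Theta_{t-1}-\epsilon\,\Delta\Theta_{t-1}$ with $\Delta\Theta_{t-1}=\nabla L(\Theta_{t-1})+\lambda\nabla L(\Theta_{t-1}+n^{(adv)}_{t-1})$, hence
\[
\Theta_t+n^*-\Theta_{t-1}=n^*-\epsilon\,\Delta\Theta_{t-1}.
\]
Applying the mean value theorem to $L$ on the segment joining $\Theta_{t-1}$ and $\Theta_t+n^*$ yields a point $\xi$ with $L(\Theta_t+n^*)-L(\Theta_{t-1})=\nabla L(\xi)\cdot(n^*-\epsilon\,\Delta\Theta_{t-1})$, so by Cauchy--Schwarz and the triangle inequality
\[
\bigl|L(\Theta_t+n^*)-L(\Theta_{t-1})\bigr|\le \|\nabla L(\xi)\|\bigl(\|n^*\|+\epsilon\|\Delta\Theta_{t-1}\|\bigr).
\]
Using the single segment here is the crucial choice: splitting into $[L(\Theta_t+n^*)-L(\Theta_t)]+[L(\Theta_t)-L(\Theta_{t-1})]$ and bounding each term with the descent lemma introduces a spurious $\frac{l}{2}\rho^2$ contribution that does not appear in \eqref{eq:max}, whereas the one-segment estimate keeps everything first order.

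Next I would bound the three factors. Condition~(2) gives $\|\nabla L(\xi)\|\le\delta$ and $\|\nabla L(\Theta_{t-1})\|\le\delta$ in the near-convergence/saturation region, while $\|n^*\|\le\rho$. The only genuinely off-manifold point is the perturbed argument $\Theta_{t-1}+n^{(adv)}_{t-1}$, and here I would invoke the $L$-smoothness assumption~(1) together with $\|n^{(adv)}_{t-1}\|\le\rho$ to get $\|\nabla L(\Theta_{t-1}+n^{(adv)}_{t-1})\|\le\|\nabla L(\Theta_{t-1})\|+l\rho\le\delta+l\rho$; this is exactly the step that produces the factor $l$ in the final bound. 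Collecting, $\|\Delta\Theta_{t-1}\|\le\delta+\lambda(\delta+l\rho)$, and substituting gives $\delta\bigl(\rho+\epsilon[\delta+\lambda(\delta+l\rho)]\bigr)=(1+\lambda\epsilon l)\rho\delta+\epsilon(1+\lambda)\delta^2$, which is precisely \eqref{eq:max}.

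The main obstacle is reconciling the pointwise gradient hypothesis~(2) with the need to control $\nabla L$ at $\xi$ and at $\Theta_{t-1}$ simultaneously: the estimate really wants $\|\nabla L\|\le\delta$ to hold uniformly over the relevant region rather than at a single iterate, which is reasonable precisely in the saturation/near-convergence regime the theorem targets. The delicate part is then deciding which points to treat as \emph{inside} this region (bounded directly by $\delta$) and which to treat as off-manifold (bounded through $L$-smoothness); routing only $\Theta_{t-1}+n^{(adv)}_{t-1}$ through assumption~(1) is what makes the constants match \eqref{eq:max} exactly.
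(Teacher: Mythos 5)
Your proposal is correct and arrives at exactly the paper's bound, but by a genuinely different route. The paper runs two separate first-order Taylor expansions with dropped remainders: it first writes $L(\Theta_t)\approx L(\Theta_{t-1})+\nabla L(\Theta_{t-1})^{\top}(\Theta_t-\Theta_{t-1})$, substitutes the update rule, bounds the cross term $\nabla L(\Theta_{t-1})^{\top}\nabla L(\Theta_{t-1}+n_{t-1})$ by $\delta^2+\rho\delta l$ via L-smoothness, and separately linearizes the maximum as $\max_{\|n\|\le\rho}L(\Theta_t+n)\approx L(\Theta_t)+\rho\|\nabla L(\Theta_t)\|$ before combining. You instead apply the mean value theorem once along the single segment from $\Theta_{t-1}$ to $\Theta_t+n^*$ and push everything through one Cauchy--Schwarz plus a triangle-inequality bound on $\|\Delta\Theta_{t-1}\|$; the factor $l$ enters at the same place in both arguments (controlling the gradient at the perturbed point $\Theta_{t-1}+n^{(adv)}_{t-1}$), which is why the constants coincide. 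Your version buys rigor and brevity: it is an actual inequality with no discarded $\mathcal{O}(\|\Theta_t-\Theta_{t-1}\|^2)$ terms, and your warning against splitting the segment (which would introduce a spurious $\tfrac{l}{2}\rho^2$) is well taken. The price is that you need $\|\nabla L(\xi)\|\le\delta$ at an interior point of the segment, i.e.\ a bound over a neighborhood rather than at a single iterate --- but the paper's proof has the symmetric defect (it silently uses $\|\nabla L(\Theta_{t-1})\|\le\delta$ and $\|\nabla L(\Theta_t)\|\le\delta$ simultaneously, and its linearization of the max likewise presumes the gradient is stable over the $\rho$-ball), so neither argument strictly follows from hypothesis (2) as literally stated, and both are honest in the saturation/near-convergence regime the theorem targets. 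The paper's decomposition does retain one expository advantage: it isolates the term $\rho\|\nabla L(\Theta_t)\|$ and the explicit approximate maximizer $n=\rho\nabla L(\Theta_t)/\|\nabla L(\Theta_t)\|$, which feeds directly into the intuition behind Corollaries 2.1 and 2.2.
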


\begin{proof}
First, $L(\Theta_{t})$ can be written as:
\begin{equation}
\label{eq:theta_t_l_approx}
    \begin{aligned}
    L\left(\Theta_{t}\right)&=L\left(\Theta_{t-1}\right)+\nabla L\left(\Theta_{t-1}\right)^{\top}\left(\Theta_{t}-\Theta_{t-1}\right)+\mathcal{O}\left(\left\|\Theta_{t}-\Theta_{t-1}\right\|^{2}\right)\\
    &\approx L\left(\Theta_{t-1}\right)+\nabla L\left(\Theta_{t-1}\right)^{\top}\left(\Theta_{t}-\Theta_{t-1}\right) \\
    &=L\left(\Theta_{t-1}\right)- \epsilon \nabla L (\Theta_{ t - 1 } ) ^ { \top } \left[ 
        \nabla L\left(\Theta_{t-1}\right)+\lambda \nabla L \left(\Theta_{t-1}+n_{t-1}\right)
    \right] \\
    &= L\left(\Theta_{t-1}\right)- \epsilon \nabla L (\Theta_{ t - 1 } ) ^ { \top } 
        \nabla L\left(\Theta_{t-1}\right)\\
    & -\lambda \epsilon \nabla L (\Theta_{ t - 1 } ) ^ { \top } \nabla L \left(\Theta_{t-1}+n_{t-1}\right).
    \end{aligned}
\end{equation}
We denote the last term $\nabla L (\Theta_{ t - 1 } ) ^ { \top } \nabla L \left(\Theta_{t-1}+n_{t-1}\right)$ as $*$ and approximate it as:
\begin{equation}
\label{eq:star_app}
\begin{aligned}
* &=\nabla L(\Theta_{t-1 })^{\top}
\left[
\nabla L \left(\Theta_{t-1}\right)+\nabla L\left(\Theta_{t-1}+n\right)-\nabla L\left(\Theta_{t-1}\right)\right]\\
&\leqslant \left\|\nabla L\left(\Theta_{t-1}\right)\right\|^{2}
+ \left\|\nabla L\left(\Theta_{t-1}\right)\right\| \cdot \left\|\nabla L\left(\Theta_{t-1}+n\right)-\nabla L \left( \Theta_{t-1}\right)\right\|\\
&\leqslant \delta ^{2}+\delta l\|\Theta_{t-1}+n-\Theta_{t-1}\|
\\&\leqslant\delta ^{2}+\rho \delta l.
\end{aligned}    
\end{equation}

Secondly, we approximate $\max_{||n||\leq\rho}\{L(\Theta_t+n)\}$ via a first-order Taylor expansion as:
\begin{equation}\label{eq:theta_tn_app}
\begin{aligned}
    &\max_{||n||\leq\rho}\{L(\Theta_t+n)\} \\
    \approx &L\left(\Theta_{t} \right) + \max_{||n||\leq\rho} \nabla L\left(\Theta_{t} \right)^{\top}\cdot n \\
    \approx &L\left(\Theta_{t} \right) + \rho \left\| \nabla L\left(\Theta_{t} \right)\right\|.
\end{aligned}
\end{equation}
The maximum is obtained when $n = \rho \nabla L\left(\Theta_{t} \right)/\left\| \nabla L\left(\Theta_{t} \right)\right\|$.
Thus, by combining Eq. \eqref{eq:theta_t_l_approx} - \eqref{eq:theta_tn_app} we have:
\begin{equation}
    \begin{aligned}
    &\left|\max_{||n||\leq\rho}\{L(\Theta_t+n)\}-L(\Theta_{t-1}) \right| \\
    \approx& \left| 
        L\left(\Theta_{t} \right) + \rho \left\| \nabla L\left(\Theta_{t} \right)\right\| - L(\Theta_{t-1})
    \right| \\
    \leqslant& \left| 
        L\left(\Theta_{t} \right) + \rho \left\| \nabla L\left(\Theta_{t} \right)\right\| - \{L\left(\Theta_{t}\right) + \epsilon \nabla L (\Theta_{ t - 1 } ) ^ { \top } 
        \nabla L\left(\Theta_{t-1}\right) + \lambda \epsilon *  \}
    \right| \\
    =&\left|\rho \left\| \nabla L\left(\Theta_{t} \right)\right\| -\epsilon \left\|\nabla L (\Theta_{ t - 1 } ) \right\|^2  - \lambda\epsilon *
    \right|\\
    \leqslant& \rho \delta + \epsilon \delta^2 + \lambda \epsilon \delta^2 + \lambda \epsilon \rho \delta l\\
     =& (1+ \lambda \epsilon l)\rho \delta + \epsilon(1+\lambda) \delta^2.
    \end{aligned}
\end{equation}
The theorem is proved.
\end{proof}

Based on Theorem. \ref{theorem:advt_approx}, when the objective is stuck in the saturation region or is about to converge, we have the following two corollaries:
\begin{customthmc}{2.1}
\label{cor1}
The perturbation in Eq. (3) can be approximated as:
\begin{equation}
    n^{(adv)}_t = \Theta_{t-1} - \Theta_t.
\end{equation}
\end{customthmc}

\begin{customthmc}{2.2}
\label{cor2}
The AdvTNE optimization methods can be formulated as:
\begin{equation}\label{eq:advt_momentum}
\begin{split}
    \Theta_{t+1} &= \Theta_t- \epsilon \Delta\Theta_t,\\
    \Delta \Theta_t = \nabla &L(\Theta_t) +\lambda \nabla L(\Theta_{t-1}).
\end{split}
\end{equation}
\end{customthmc}

\begin{proof}
When the objective is optimized to the saturation region or is about to converge, the bound of gradient norm $\delta$ is quite small. Thus, $\left|\max_{||n||\leq\rho}\{L(\Theta_t+n)\}-L(\Theta_{t-1}) \right| \approx 0$.
We have $\max_{||n||\leq\rho}\{L(\Theta_t+n)\}\approx L(\Theta_{t-1})$.
$\Theta_t+n \approx \Theta_{t-1}$ is the sufficient condition for $\max_{||n||\leq\rho}\{L(\Theta_t+n)\}\approx L(\Theta_{t-1})$. 
Given that we only need to find a suitable solution of $n$, so that we have:
\begin{equation}
 n^{(adv)}_t = \Theta_{t-1} - \Theta_t.
\end{equation}

Replacing the $n^{(adv)}_t$ in Theorem. \ref{equ:advtne_strategy}, we can easily obtain:
\begin{equation}
\begin{split}
  \Theta_{t+1} &= \Theta_t-\epsilon\Delta\Theta_t,\\
    \Delta \Theta_t =  \nabla &L(\Theta_t) + \lambda \nabla L(\Theta_{t-1}).
\end{split}
\end{equation}
These two corollaries have been proven.
\end{proof}

\subsection{Proof of Theorem 3}

\begin{theorem}
\label{theorem:minimum}
For sufficiently large embedding dimensionality $r$, the minimum for similarity of point pair $(v_i, v_j)$ for SGNE objective has the following form:
\begin{equation}
\begin{split}
\label{equ:line_converge}
    S_{ij}^+ = \frac{w_{ij}}{w_{ij}+(\frac{d_id_j}{D})k},
\end{split}
\end{equation}
where $d_i = \sum_j w_{ij}$ and $D = \sum_i d_i$
\end{theorem}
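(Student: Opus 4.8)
The plan is to exploit the classical observation (going back to the implicit matrix factorization viewpoint of \cite{levy2014neural}) that when $r$ is large enough the inner products $x_{ij} := \vec u_j' \cdot \vec u_i$ can be treated as mutually independent free variables. With $N$ nodes the matrix of these inner products has rank at most $r$, so once $r$ is sufficiently large every prescribed configuration of the $x_{ij}$ is realizable by some choice of embeddings; this decouples the global minimization of Eq. \eqref{equ:line_objective} into independent scalar problems, one per pair $(v_i, v_j)$. This decoupling is the crux of the argument and the single place where the large-$r$ hypothesis is indispensable.

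Granting the reparametrization, I would first rewrite the negative-sampling expectation explicitly as $\mathbb{E}_{v_n \sim P_n(v)}[\log S_{in}^-] = \sum_n \frac{d_n}{D}\log(1-\sigma(x_{in}))$, using $P_n(v)\propto d_v$ with normalizer $D = \sum_i d_i$. Then I would gather every term of the loss that depends on a fixed $x_{ij}$. There are exactly two: the positive term $-w_{ij}\log\sigma(x_{ij})$, and the negative term arising when the sampled node $n$ equals $j$. Since the inner negative-sampling sum does not depend on the summation index $j$, the weight in front of it collapses via $\sum_{j \in N_S(v_i)} w_{ij} = d_i$, so the relevant negative contribution is $-k\,\frac{d_i d_j}{D}\log(1-\sigma(x_{ij}))$. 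Hence the part of the objective governing $x_{ij}$ is
\begin{equation}
L_{ij}(x_{ij}) = -w_{ij}\log\sigma(x_{ij}) - k\,\frac{d_i d_j}{D}\log\bigl(1-\sigma(x_{ij})\bigr).
\end{equation}

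Finally I would minimize $L_{ij}$ as a scalar function. Using $\frac{d}{dx}\log\sigma(x)=1-\sigma(x)$ and $\frac{d}{dx}\log(1-\sigma(x))=-\sigma(x)$, setting $L_{ij}'(x_{ij})=0$ gives $w_{ij}(1-\sigma(x_{ij}))=k\,\frac{d_i d_j}{D}\sigma(x_{ij})$, which solves to $\sigma(x_{ij})=w_{ij}/(w_{ij}+\frac{d_id_j}{D}k)$; since $S_{ij}^+=\sigma(x_{ij})$ this is exactly Eq. \eqref{equ:line_converge}. To confirm it is genuinely a minimum rather than an arbitrary critical point, I would check the second derivative, $L_{ij}''(x_{ij})=(w_{ij}+k\frac{d_id_j}{D})\,\sigma(x_{ij})(1-\sigma(x_{ij}))>0$, so each $L_{ij}$ is strictly convex and the stationary point is its unique global minimizer. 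The main obstacle is not this elementary calculus but rather rigorously justifying the first step, namely that the $x_{ij}$ can be optimized independently, which is precisely what forces the embedding dimension to be large enough to realize an essentially arbitrary Gram configuration.
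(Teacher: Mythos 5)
Your proposal is correct and follows essentially the same route as the paper: decouple the objective into per-pair scalar problems via the large-$r$ assumption (citing the same matrix-factorization viewpoint), isolate the local objective $L_{ij} = w_{ij}\log S_{ij}^+ + k\frac{d_id_j}{D}\log(1-S_{ij}^+)$, and set its derivative to zero. Your version is in fact slightly more careful than the paper's — you differentiate with correct signs (the paper's displayed gradient has a sign typo in the second term) and you verify strict convexity to confirm the stationary point is a minimum.
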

\begin{proof}
Simplify the objective of SGNE as:
\begin{equation}
\begin{split}
   L &=  \sum_{v_i \in V}  \sum_{v_j \in N_S(v_i)}  w_{ij}\left(\log S_{ij}^+  + k \cdot \mathbb{E}_{v_n \sim P_{n}(v)}[\log S_{in}^-]\right)\\
     & = \sum_{v_i \in V}  \sum_{v_j \in N_S(v_i)} w_{ij} \log S_{ij}^+ 
     + k \sum_{v_i \in V}  d_i \cdot \mathbb{E}_{v_n \sim P_n(v)}[\log S_{in}^- ]\\
     & = \sum_{v_i \in V} \sum_{v_j \in V} w_{ij} \log S_{ij}^+ + \frac{k d_j d_i}{D} \log (1 - S_{ij}^+).
    \end{split}
\end{equation}
Since embedding dimensionality $r$ is sufficiently large, each $S_{ij}^+$ can assume a value independently of the others \cite{levy2014neural}. We consider the local objective for $S_{ij}^+$:
\begin{equation}
    L_{ij} = w_{ij} \log S_{ij}^+ + \frac{k d_j d_i}{D} \log (1 - S_{ij}^+).
\end{equation}
Let the gradient $\nabla L_{ij}(S_{ij}^+)$ be equal to 0:
\begin{equation}
    \nabla L_{ij}(S_{ij}^+) = \frac{w_{ij}}{S_{ij}^+} + \frac{k d_j d_i}{D (1 - S_{ij}^+)} = 0.
\end{equation}
We have 
\begin{equation}
    S_{ij}^+ = \frac{w_{ij}}{w_{ij}+(\frac{d_id_j}{D})k}.
\end{equation}
Thus, the theorem has been proved.
\end{proof}

\subsection{Proof of Theorem 4}

\begin{theorem}
Considering a scale-free network that the degree $d$ of each node follows the power-law as $d\sim P(d) \propto d^{-\alpha}, \, \alpha \in (2, 3)$, the number of edges is $|E|$ and the number of negative samples is $k$. Given a threshold $\gamma \in (0,1)$, the following equation holds:
\begin{equation}
\label{equ:s}
    P(S^+_{ij} \geq \gamma) \approx \frac{(\alpha-1)^2}{{-\alpha+2}}\frac{\log |E|}{2|E|}\cdot(R^{\alpha-2}-1),
\end{equation}
where $R=\frac{\gamma k}{2(1-\gamma)|E|}$.
\end{theorem}
\begin{proof}
For brevity, let $t_{ij}=\frac{w_{ij}}{d_id_j}$ and $D = 2|E|$, and the probability can be represented as follows:
\begin{equation}
    P(S^+_{ij} \geq \gamma) = P(\frac{w_{ij}}{d_id_j} \geq  \frac{\gamma}{1-\gamma}\frac{k}{D}) = P(t_{ij}\geq R).
\end{equation}
Following the definition of cumulative probability and the law of total probability, we have:
\begin{equation}
    \label{eq:them4_mid}
    \begin{split}
        P(t_{ij}\geq R) & = \sum_{t_{ij}=R}^{\infty} p(t_{ij})
         = \sum_{t_{ij}=R}^{\infty}\sum_{c_{ij} = 1}^{\infty}p(t_{ij}|c_{ij})p(c_{ij}),
    \end{split}
\end{equation}
where $c_{ij}=d_id_j$. Since we use LINE objective, $w_{ij} \in \{0, 1\}$. Remind that $t_{ij}=0$ if $w_{ij}=0$, therefore, we only need to calculate the probability when $w_{ij}=1$ i.e. $t_{ij}=\frac{1}{c_{ij}}\geq R$, which leads to $c_{ij} \leq \frac{1}{R}$. Eq. \eqref{eq:them4_mid} can be rewritten as:
\begin{equation}
    \begin{split}
        P(t_{ij}\geq R)=\sum_{c_{ij}=1}^{\frac{1}{R}}\sum_{t_{ij}=R}^{\infty}p(t_{ij}|c_{ij})p(c_{ij})
    \end{split}
\end{equation}

Consider the summation: $\sum_{t_{ij}=R}^{\infty}p(t_{ij}|c_{ij},D)$, following the configuration model \cite{PhysRevE.95.052303}, given $d_i,d_j$ and $D$, it can be reduced as $\frac{c_{ij}}{D}$.

\begin{equation}
    \begin{split}
        P(t_{ij}\geq R) &=\sum_{c_{ij}=1}^{\frac{1}{R}}\frac{c_{ij}}{D}p(c_{ij}). \\
    \end{split}
\end{equation}
Remember that the degree follows Power-law distribution of which Probability Density Function is $f_d(d) = (\alpha - 1)d^{-\alpha}$ \cite{clauset2009power}, we have
\begin{equation}
\begin{split}
P(t_{ij}\geq R) &\approx \int_{1}^{\frac{1}{R}} \frac{c_{ij}}{D}f_c(c_{ij})\, dc_{ij} \\
&= \frac{1}{D}\int_{1}^{\frac{1}{R}} c_{ij}\,dc_{ij}\int_{1}^{|E|}\frac{f_d(\frac{c_{ij}}{d_j})f_d(d_j)}{d_j}\, dd_{j}\\
&= \frac{1}{D}\int_{1}^{\frac{1}{R}} c_{ij}\,dc_{ij}\int_{1}^{|E|}\frac{(\alpha-1)^2c_{ij}^{-\alpha}}{d_j}\, dd_{j}\\
&= \frac{(\alpha-1)^2}{D}\int_{1}^{\frac{1}{R}}c_{ij}^{-\alpha+1}\log |E|\,dc_{ij}\\
&= \frac{(\alpha-1)^2}{{-\alpha+2}}\frac{\log |E|}{2|E|}\cdot(R^{\alpha-2}-1)
\end{split}
\end{equation}
Thus, the theorem has been proved.
\end{proof}

\end{document}